\def\BibTeX{{\rm B\kern-.05em{\sc i\kern-.025em b}\kern-.08em
    T\kern-.1667em\lower.7ex\hbox{E}\kern-.125emX}}
\theoremstyle{definition}
\newtheorem{theorem}{Theorem}
\newtheorem{definition}{Definition}
\newtheorem{lemma}{Lemma}
\newcommand{\tabref}[1]{Table~\ref{#1}}
\newcommand{\figref}[1]{Figure~\ref{#1}}
\newcommand{\lemref}[1]{Lemma~\ref{#1}}
\newcommand{\thmref}[1]{Theorem~\ref{#1}}
\newcommand{\secref}[1]{Section~\ref{#1}}
\newcommand{\algoref}[1]{Algorithm~\ref{#1}}
\newcommand{\R}[1]{\mathbb{R}^{#1}}
\newcommand{\E}[1]{\mathbb{E}\left[{#1}\right]}
\newcommand{\His}[1]{\mathcal{H}_{#1}}
\newcommand{\argmax}{\operatornamewithlimits{arg\,max}}
\newcommand{\relmid}[1]{\mathrel{}\middle#1\mathrel{}}
\begin{document}

\title{An Arm-Wise Randomization Approach to\\Combinatorial Linear Semi-Bandits}

\author{
    \IEEEauthorblockN{Kei Takemura${}^*$ and Shinji Ito${}^{*\dagger}$}
    \IEEEauthorblockA{
        ${}^*$NEC Corporation, Japan, \{k-takemura@az, s-ito@me\}.jp.nec.com \\
        ${}^\dagger$The University of Tokyo, Japan \\
    }
}

\maketitle

\begin{abstract}
  \textit{Combinatorial linear semi-bandits (CLS)} are widely applicable frameworks of sequential decision-making,
  in which a learner chooses a subset of arms from a given set of arms associated with feature vectors.
  Existing algorithms work poorly for the \textit{clustered case}, in which the feature vectors form several large clusters.
  This shortcoming is critical in practice because
  it can be found in many applications, including recommender systems.
  In this paper,
  we clarify why such a shortcoming occurs,
  and
  we introduce a key technique of \textit{arm-wise randomization} to overcome it.
  We propose two algorithms with this technique:
  the \textit{perturbed C${}^2$UCB (PC${}^2$UCB)} and the \textit{Thompson sampling (TS)}.
  Our empirical evaluation with artificial and real-world datasets demonstrates that
  the proposed algorithms with the arm-wise randomization technique outperform the existing algorithms without this technique,
  especially for the clustered case.
  Our contributions also include theoretical analyses that provide high probability asymptotic regret bounds for our algorithms.
\end{abstract}

\begin{IEEEkeywords}
multi-armed bandit, combinatorial semi-bandit, contextual bandit, recommender system
\end{IEEEkeywords}

\section{Introduction}\label{sec:introduction}

The multi-armed bandit (MAB) problem is a classic decision-making problem in statistics and machine learning.
In MAB, a leaner chooses an arm from a given set of arms that correspond to a set of actions
and gets feedback on the chosen arm, iteratively.
MAB models the trade-off between exploration and exploitation,
a fundamental issue in many sequential decision-making problems.

Over the last decade, the linear bandit (LB) problem, a generalization of (stochastic) MAB, has been extensively studied both theoretically and practically
because many real-world applications can be formulated as LBs \cite{abbasi11,agrawal13b,auer02,chu11,dani08,li10}.
LB utilizes side information of given arms for choosing an arm.
When recommending news articles, for example,
the side information represents contents that may frequently change \cite{li10}.
An alternative line of generalization is the combinatorial semi-bandit (CS) problem \cite{gai12,chen13}.
While MAB and LB only cover cases in which one arm can be selected in each round,
CS covers cases in which multiple arms can be selected simultaneously.

More recently,
the combinatorial linear semi-bandit (CLS) problem has been studied as a generalization of both LB and CS
for more complex and realistic applications \cite{qin14,wen15}.
For example, the semi-bandit setting allows CLS to optimize recommender systems that display a set of items in each time window.
Algorithms for MAB and LB can be directly applied to CS and CLS, respectively,
but the resulting algorithms are not applicable because the arms exponentially increase.

Existing algorithms for CLS are theoretically guaranteed to enjoy a sublinear regret bound,
which implies that the arms chosen by the algorithms converge to optimal ones as the rounds of decision-making progress.
However,
we show that the rewards obtained by such algorithms grow particularly slowly in early rounds
when the feature vectors of arms form many large clusters, which we call the \textit{clustered case}.
Intuitively,
when the set of arms forms many large clusters of similar arms,
existing algorithms typically choose arms from only one cluster in each round.
As a result,
the algorithms fail to balance the trade-off if the majority of the clusters are sub-optimal.
This issue is crucial in practice because clustered cases can be found in applications such as recommender systems \cite{gentile14,li16,gentile17}.
In this paper, we aim to overcome this issue for clustered cases and to propose practically effective algorithms.

Our contributions are two-fold:
One,
we clarify why existing algorithms are largely ineffective for clustered cases.
Moreover, we show that a natural extension of the Thompson sampling (TS) algorithm for LB is ineffective for the same reason.
We cover more quantitative analyses in \secref{sec:motivating} and \secref{sec:experiment}.
Two,
we introduce the \textit{arm-wise randomization} technique
of overcoming this disadvantage for the clustered case,
which draws individual random parameters \textit{for each arm}.
Conversely, the standard TS algorithm uses \textit{round-wise randomization}, which shares random parameters among all arms.\footnote{
  Round-wise randomization and arm-wise randomization are indistinguishable in the context of standard (non-contextual) MAB problems.
  The difference appears when side information of given arms is considered.
}
Using the arm-wise randomization technique,
we propose the perturbed C${}^2$UCB (PC${}^2$UCB) and the TS algorithm with arm-wise randomization for CLS.
Unlike existing algorithms, which choose arms from a single cluster,
the proposed algorithms choose arms from diverse clusters
thanks to the arm-wise randomization.
Consequently,
our algorithms can find an optimal cluster and get larger rewards in early rounds.

We show not only the proposed algorithms' practical advantage through numerical experiments but also their high probability regret bound.
In the numerical experiments, we demonstrate on both artificial and real-world datasets that the proposed algorithms resolve the issue for clustered cases.
To the best of our knowledge, the TS algorithms with round-wise randomization and arm-wise randomization are the first TS algorithms for CLS with a high probability regret bound.

\section{Related Work}

UCB algorithms with theoretical guarantees have been developed for many applications \cite{li10,chu11,qin14,wen15}.
Li \textit{et al.} \cite{li10} studied personalized news article recommendations formulated as LB and proposed LinUCB.
Using techniques proposed by Auer \cite{auer02},
Chu \textit{et al.} \cite{chu11} showed that a variant of LinUCB has a high probability regret bound.
Qin, Chen, and Zhu \cite{qin14} studied a more realistic setting in which the recommender system chooses a set of items simultaneously as diversified recommendations maximize user interest.
They formulated the problem as a nonlinear extension of CLS and showed that C${}^2$UCB has a high probability regret bound for the problem.

The TS algorithm was originally proposed for MAB as a heuristic \cite{thompson33}.
Several previous studies proposed TS algorithms for generalized problems and empirically demonstrated TS algorithms are comparable or superior to UCB algorithms and others using synthetic and real-world datasets \cite{chapelle11,may12,scott10,wang17}.
Chapelle and Li \cite{chapelle11} focused on MAB and the contextual bandit problem for display advertising and news article recommendations.
Note that the contextual bandit problem includes LB as a special case.
Wang \textit{et al.} \cite{wang17} proposed the ordered combinatorial semi-bandit problem (a nonlinear extension of CLS) for the whole-page recommendation.

TS algorithms have been theoretically analyzed for several problems \cite{abeille17,agrawal12,agrawal13a,agrawal13b,kaufmann12,may12,russo14,russo16,wen15}.
For MAB and LB, Agrawal and Goyal \cite{agrawal13b} proved a high probability regret bound.
Abeille and Lazaric \cite{abeille17} showed the same regret bound in an alternative way and revealed conditions for variants of the TS algorithm to have such regret bounds.
For the combinatorial semi-bandit problem and generalized problems including CLS,
Wen \textit{et al.} \cite{wen15} proved a regret bound regarding the Bayes cumulative regret proposed by Russo and Van Roy \cite{russo14}.

\section{Combinatorial Linear Semi-bandit}
\subsection{Problem Setting}
\label{sec:problemsetting}
In this section, we present a formal definition of the CLS problem.
Let $T$ denote the number of rounds
in which the learner chooses arms and receives feedback.
Let $N$ denote the number of arms from which the learner can choose.
Let $k$ denote a given parameter standing for the upper bound for the number of arms that can be chosen in each round.
For an arbitrary integer $N$,
let $[N]$ stand for the set of all positive integers at most $N$;
i.e., $[N] = \{1, \dots, N\}$.
Let $S_t \subseteq \{ I \subseteq [N] \mid |I| \le k \}$ be the set of all available subsets of arms in each round $t \in [T]$.
We call $I \in S_t$ a \textit{super arm}.
At the beginning of round $t$,
the learner observes \textit{feature vectors} $x_t(i)$
that correspond to each arm $i \in [N]$
and observes the set $S_t$ of available super arms.
Note that feature vectors $x_t(i)$ and available super arms $S_t$ can change in every round.
The learner chooses a super arm $I_t \in S_t$
and then
observes rewards $r_t(i)$ for chosen arms $i \in I_t$ at the end of round $t$
based on $\{ x_t(i) \}_{i \in [N]}$, $S_t$, and observations before the current round.

We assume that
the expected reward for each arm $i$ for all $t \in [T]$ and $i \in [N]$ can be expressed as the inner product of the corresponding feature vector ${ x_t(i)}$ and a constant \textit{true parameter} $\theta^*$ that is unknown to the learner,
i.e.,
we have
\begin{align*}
  \E{ r_t(i) \mid \His{t-1} } &= \E{ r_t(i) \mid x_t(i) } \\
  &= {\theta^*}^\top x_t(i),
\end{align*}
where $\His{t}$ stands for the history $\{ \{ x_{\tau+1}(i) \}_{i \in [N]}, I_{\tau+1}, \{r_\tau(i)\}_{i \in I_\tau} \mid \tau \le t \}$ of all the events before the learner observes rewards in round $t$.
The performance of the learner is measured by the \textit{regret} defined by the following:
\[
  R(T) = \sum_{t \in [T]} \sum_{i \in I_t^*} {\theta^*}^\top x_t(i) - \E{ \sum_{t \in [T]} \sum_{i \in I_t} r_t(i) },
\]
where we define $I_t^* = \argmax_{I \in S_t} \sum_{i \in I} {\theta^*}^\top x_t(i)$.
The learner aims to maximize the cumulative reward over $T$ rounds $\sum_{t \in [T]} \sum_{i \in I_t} r_t(i)$, which is equivalent to minimizing the regret.

\subsection{Assumptions on rewards and features}\label{sec:assumptions}
We present a few standard assumptions in literature on LB (e.g., \cite{agrawal13b,chu11}).
We assume that for any $t \in [T]$ and $i \in [N]$,
the noise $\eta_t(i) = r_t(i) - {\theta^*}^\top x_t(i)$ is conditionally $R$-sub-Gaussian for some constant $R \ge 0$;
i.e., $\forall \lambda \in \R{}, \E{ e^{\lambda\eta_t(i)} \mid \His{t-1} } \le \mathrm{exp}\left(\lambda^2 R^2 / 2\right)$.
This assumption holds if rewards $r_t(i)$ lie in an interval with a maximum length of $2R$.
We also assume that
$\|\theta^*\|_2 \le S$ and $\|x_t(i)\|_2 \le 1$
for all $t \in [T]$ and $i \in [N]$.

\section{Motivating Examples}\label{sec:motivating}

\subsection{Stagnation of C${}^2$UCB Algorithms in Clustered Cases}\label{sec:stag}

The state-of-the-art \textit{C${}^2$UCB algorithm} \cite{qin14} solves the CLS problem
while theoretically guaranteeing a sublinear regret bound.
Its procedure is described in \algoref{alg:c2ucb}.\footnote{
  C${}^2$UCB can be applied to a class of problems more general than our problem setting in \secref{sec:problemsetting}.
  The description of \algoref{alg:c2ucb} is simplified to adjust to our setting.
}
In each round,
this algorithm assigns the \textit{estimated rewards} $\hat{r}_t(i) = \hat{\theta}_t^\top x_t(i) + \alpha_t\sqrt{x_t(i)^\top V_{t-1}^{-1}x_t(i)}$ to each arm $i$ (line 7),
where $\hat{r}_t(i)$ corresponds to the upper confidence bound for $\theta^{*\top} x_t(i)$.
Then,
the algorithm chooses a super arm $I_t$ from $S_t$ so that the sum of estimated rewards $\hat{r}_t(i)$ for $i \in I_t$ is maximized (line 12).
Let us stress that
estimated rewards $\hat{r}_t(i)$ are calculated from $x_t(i)$ \textit{deterministically} in the C${}^2$UCB algorithm;
i.e.,
$x_t(i) = x_t(j)$ means that $\hat{r}_t(i) = \hat{r}_t(j)$.

Despite having theoretical advantages,
C${^2}$UCB sometimes produces poor results,
especially in \textit{clustered cases}.
In a clustered case,
feature vectors $\{ x_t(i) \}_{i=1}^N$ form clusters;
for example,
the situation in which there are 3 clusters $\{ x_t(i) \}_{i=1}^C$, $\{ x_t(i) \}_{i=C+1}^{2C}$, and
$\{ x_t(i) \}_{i=2C+1}^{3C}$ of size $C$,
centered at $c_1$, $c_2$, and $c_3$, respectively;
i.e.,
$x_t(i) \approx c_1$ for $1 \leq i \leq C$,
$x_t(i) \approx c_2$ for $C+1 \leq i \leq 2C$, and
$x_t(i) \approx c_3$ for $2C+1 \leq i \leq 3C$.
Moreover,
we suppose that
the numbers of clusters and feature vectors belonging to a cluster are sufficiently larger than $T$ and $k$, respectively,  in clustered cases.
For simplicity, we assume $S_t = \{ I \subseteq [N] \mid |I| = k \}$ for all $t \in [T]$.
Under this constraint, C${}^2$UCB chooses the top $k$ arms concerning the estimated reward.

\begin{algorithm}[tb]
  \caption{\protect\fbox{C${}^2$UCB \cite{qin14}} and \protect\doublebox{Perturbed C${}^2$UCB}}
  \label{alg:c2ucb}
  \begin{algorithmic}[1]
    \Require \fbox{$\lambda > 0$ and $\alpha_t > 0$} \doublebox{$\lambda > 0$, $\alpha_t > 0$ and $c > 0$}.
    \State $V_0 \gets \lambda I$.
    \State $b_0 \gets \bm{0}$.
    \For{$t = 1, 2, \dots, T$}
    \State Observe feature vectors $\{ x_t(i) \}_{i \in [N]}$ and a set of super arms $S_t$.
    \State $\hat{\theta}_t \gets V_{t-1}^{-1}b_{t-1}$.
    \For{$i \in [N]$}
    \State \fbox{$\hat{r}_t(i) \gets \hat{\theta}_t^\top x_t(i) + \alpha_t\sqrt{x_t(i)^\top V_{t-1}^{-1}x_t(i)}$.}
    \State \doublebox{Sample $\tilde{c}_t(i)$ from $U([0, c])$}
    \State \doublebox{$\tilde{\alpha}_t \gets (1 + \tilde{c}_t(i))\alpha_t$}
    \State \doublebox{$\hat{r}_t(i) \gets \hat{\theta}_t^\top x_t(i) + \tilde{\alpha}_t\sqrt{x_t(i)^\top V_{t-1}^{-1}x_t(i)}$.}
    \EndFor
    \State Play a super arm $I_t = \argmax_{I \in S_t} \sum_{i \in I} \hat{r}_t(i)$.
    \State Observe rewards $\{r_t(i)\}_{i \in I_t}$.
    \State $V_t \gets V_{t-1} + \sum_{i \in I_t} x_t(i)x_t(i)^\top$.
    \State $b_t \gets b_{t-1} + \sum_{i \in I_t} r_t(i)x_t(i)$.
    \EndFor
  \end{algorithmic}
\end{algorithm}

\begin{figure}[tb]
  \centering
  \includegraphics[width=\linewidth]{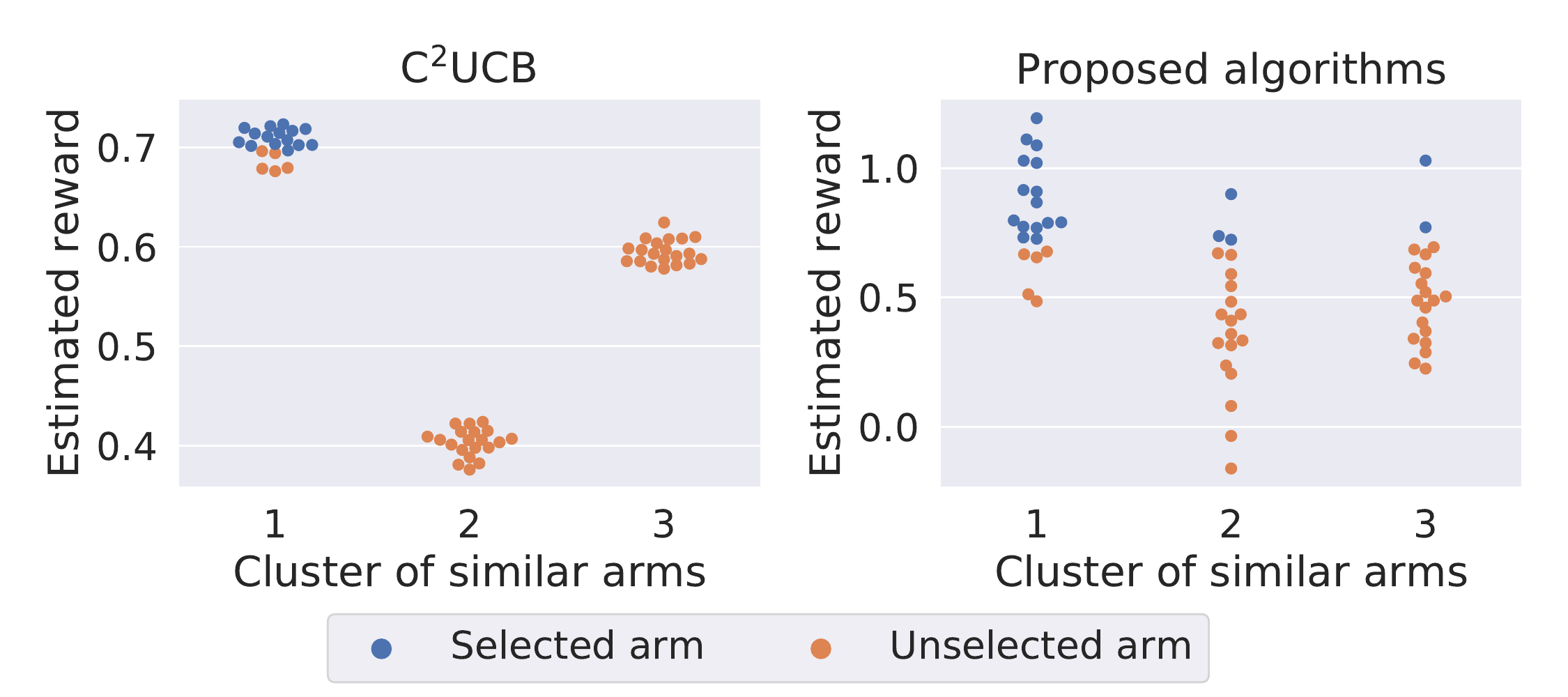}
  \caption{
    Estimated rewards of two algorithms in clustered cases.
  }
  \label{fig:estimated_rewards}
\end{figure}

In such clustered cases, C${}^2$UCB stagnates from choosing arms from a cluster in each round.
From the property of the clustered cases, in each round, the algorithm chooses a super arm as such that all arms in the super arm belong to the same cluster, as shown in \figref{fig:estimated_rewards}.
Hence, the algorithm often chooses a sub-optimal cluster.
Moreover, the algorithm may stop before finding an optimal cluster because there are fewer rounds than clusters for clustered cases.
We can apply the above discussion to other algorithms with this property
because this phenomenon is caused by choosing arms from one cluster;
for example, CombLinUCB and CombLinTS \cite{wen15}.

\subsection{Clustered Cases in Real-World Applications}\label{sec:motivating_app}
Clustered cases must be considered because they frequently arise in real-world applications,
though theoretical regret bounds mainly focus on asymptotic order for the increasing number of rounds.
For example, we can often find clustered cases such as the two applications below.

The first application is when
a marketer regularly gives sale promotions to customers to maximize their benefit while meeting cost constraints.
This application can be formulated as CLS
by representing the arms as customers and rewards as customers' promotion responses.
In this application, the customers may form clusters based on their preferences,
and the number of times the same promotion is sent far fewer than the number of customers.
In contrast to existing literature that considers clusters of customers \cite{gentile14,li16,gentile17} (in which parameters of customers are unknown),
in this setting,
parameters of customers are known as feature vectors.

The second application is a recommender system with batched feedback \cite{chapelle11}.\footnote{
  Although LB with delayed feedback is slightly more restrictive than CLS, the algorithms in this paper could be applied to the problem
  because the estimated reward of each arm does not depend on other feature vectors.
}
In a real-world setting, recommender systems periodically update their model using batched feedback.
Compared to the LB, this problem has less opportunity to update the internal model.

\section{Proposed Algorithms} \label{sec:arm_round}

In this section,
we propose two algorithms for CLS
to overcome the difficulties discussed in \secref{sec:motivating}.

Our first algorithm is \emph{perturbed} C${}^2$UCB (PC${}^2$UCB), which adds arm-wise noises to the estimated rewards, as described in \algoref{alg:c2ucb}.
For each $i \in [N]$ and $t \in [T]$, PC${}^2$UCB obtains a positive noise $\tilde{c}_t(i)$ from the uniform distribution and increases the estimated reward based on the noise.

The second one is a TS algorithm.
In \algoref{alg:ts}, we present two versions of TS algorithms:
standard \textit{round-wise randomization} and our \textit{arm-wise randomization}.
Round-wise randomization is a natural extension of the TS algorithm for LB \cite{agrawal13b}.
In this version,
we pick \textit{an} estimator $\tilde{\theta}_t$ from the posterior in \textit{each round}
and construct the estimated reward $\hat{r}_t(i)$ from this estimator $\tilde{\theta}_t$
for all arm $i \in [N]$.
Conversely,
arm-wise randomization picks estimators $\tilde{\theta}_t(i)$ from the posterior \textit{for each arm} $i \in [N]$ in any round and defines the estimated reward $\hat{r}_t(i)$ from $\tilde{\theta}_t(i)$,
as shown in \algoref{alg:ts}.

\begin{algorithm}[tb]
  \caption{Thompson sampling algorithm for CLS with \protect\fbox{round-wise randomization} and \protect\doublebox{arm-wise randomization}}
  \label{alg:ts}
  \begin{algorithmic}[1]
    \Require $\lambda > 0$ and $v_t > 0$.
    \State $V_0 \gets \lambda I$.
    \State $b_0 \gets \bm{0}$.
    \For{$t = 1, 2, \dots, T$}
    \State Observe feature vectors $\{ x_t(i) \}_{i \in [N]}$ and a set of super arms $S_t$.
    \State $\hat{\theta}_t \gets V_{t-1}^{-1}b_{t-1}$.
    \State \fbox{Sample $\tilde{\theta}_t$ from $\mathcal{N}(\hat{\theta}_t, v_t^2 V_{t-1}^{-1})$.}
    \For{$i \in [N]$}
    \State \doublebox{Sample $\tilde{\theta}_t(i)$ from $\mathcal{N}(\hat{\theta}_t, v_t^2 V_{t-1}^{-1})$.}
    \State \fbox{$\hat{r}_t(i) \gets \tilde{\theta}_t^\top x_t(i)$.}
    \State \doublebox{$\hat{r}_t(i) \gets \tilde{\theta}_t(i)^\top x_t(i)$.}
    \EndFor
    \State Play a super arm $I_t = \argmax_{I \in S_t} \sum_{i \in I} \hat{r}_t(i)$.
    \State Observe rewards $\{r_t(i)\}_{i \in I_t}$.
    \State $V_t \gets V_{t-1} + \sum_{i \in I_t} x_t(i)x_t(i)^\top$.
    \State $b_t \gets b_{t-1} + \sum_{i \in I_t} r_t(i)x_t(i)$.
    \EndFor
  \end{algorithmic}
\end{algorithm}

Our arm-wise randomization produces a remarkable advantage compared to C${}^2$UCB and the TS algorithm with round-wise randomization,
especially in clustered cases.
In our procedure,
the estimated rewards are randomized arm-wisely,
as shown in \figref{fig:estimated_rewards}.
Consequently,
our procedure can choose a super arm containing arms from different clusters
even if the feature vectors form clusters,
thereby discovering an optimal cluster in earlier rounds.
Round-wise randomization does not reduce the difficulty discussed in \secref{sec:motivating}
because it produces estimated rewards similar to the left side of \figref{fig:estimated_rewards}.

\section{Regret Analysis}

In this section, we obtain regret bounds for our algorithms with arm-wise randomization and the TS algorithm with round-wise randomization.\footnote{
  We omit our proofs of the regret bounds due to the page limit.
  The full version is available at https://arxiv.org/abs/1909.02251.
}
We define $\beta_t(\delta)$, which plays an important role in our regret analysis, as follows:
\begin{align*}
  \beta_t(\delta) = R \sqrt{d\log \frac{(1 + kt / \lambda)}{\delta}} + \sqrt{\lambda}S.
\end{align*}

For the TS algorithm with arm-wise and round-wise randomization, we can obtain the following regret bounds.

\begin{theorem}[Regret bound for the TS algorithm with arm-wise randomization]\label{thm:ts_regret}
  When we set parameters $\lambda$ and $\{ v_t \}_{t=1}^T$ so that $\lambda \ge 1$ and $v_t = \beta_t(\delta / (4NT))$ for $t \in [T]$,
  with probability at least $1 - \delta$,
  the regret for TS algorithm with arm-wise randomization is bounded as
  \begin{empheq}[left={R(T) = \empheqlbrace}]{alignat=2}
    & \tilde{O}\left( \max\left( d, \sqrt{d \lambda} \right) \sqrt{dk^2T / \lambda} \right) && \quad (\lambda \le k) \nonumber \\
    & \tilde{O}\left( \max\left( d, \sqrt{d \lambda} \right) \sqrt{dkT} \right) && \quad (\lambda \ge k), \nonumber
  \end{empheq}
  where $\tilde{O}(\cdot)$ ignores logarithmic factors with respect to $d$, $T$, $N$, $k$, and $1/\delta$.
\end{theorem}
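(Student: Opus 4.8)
\emph{Proof sketch (plan).}
The plan is to follow the standard recipe for frequentist analyses of Thompson sampling in linear bandits (Agrawal--Goyal; Abeille--Lazaric), adapting it to the combinatorial top-$k$ selection and to arm-wise sampling. Two high-probability events underpin everything. First, a self-normalized martingale bound for the ridge estimator: since at most $kt$ rewards have been observed by the end of round $t$, one obtains $\|\hat\theta_t - \theta^*\|_{V_{t-1}} \le \beta_t(\delta')$ for all $t$ simultaneously with probability $\ge 1-\delta'$ --- this is exactly why $\beta_t$ carries the factor $1 + kt/\lambda$. Second, Gaussian concentration of the samples: conditionally on $\His{t-1}$, $\tilde\theta_t(i) - \hat\theta_t \sim \mathcal{N}(0, v_t^2 V_{t-1}^{-1})$, so $\|\tilde\theta_t(i) - \hat\theta_t\|_{V_{t-1}}/v_t$ is $\chi_d$-distributed and is $\tilde{O}(\sqrt d)$ with high probability; a union bound over the $N$ arms and $T$ rounds produces the $\delta/(4NT)$ in the parameter choice, and with $v_t = \beta_t(\delta/(4NT)) \ge \beta_t$ one gets $|\hat r_t(i) - \theta^{*\top}x_t(i)| = \tilde{O}(\sqrt d\,\beta_t)\,\|x_t(i)\|_{V_{t-1}^{-1}}$ for every arm (using $\lambda \ge 1$, so that $\|x_t(i)\|_{V_{t-1}^{-1}} \le 1$).

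Next I would use the per-round decomposition. Since $I_t^* \in S_t$ and $I_t$ maximizes $\sum_{i\in I}\hat r_t(i)$, the round-$t$ regret satisfies
\[
  \sum_{i\in I_t^*}\theta^{*\top}x_t(i) - \sum_{i\in I_t}\theta^{*\top}x_t(i) \le \sum_{i\in I_t^*}\left(\theta^{*\top}x_t(i) - \hat r_t(i)\right) + \sum_{i\in I_t}\left(\hat r_t(i) - \theta^{*\top}x_t(i)\right).
\]
The second (``estimation'') term, summed over $t$, is at most $\tilde{O}(\sqrt d\,\beta_T)\sum_t\sum_{i\in I_t}\|x_t(i)\|_{V_{t-1}^{-1}} \le \tilde{O}(\sqrt d\,\beta_T)\sqrt{kT\sum_t\sum_{i\in I_t}\|x_t(i)\|_{V_{t-1}^{-1}}^2}$ by Cauchy--Schwarz, and the elliptical-potential (matrix-determinant) lemma bounds the inner sum by $(1 + k/\lambda)\,d\log(1 + kT/(d\lambda))$: since up to $k$ rank-one terms are added to $V_{t-1}$ per round, the determinant inequality loses a factor $1 + k/\lambda$, a constant when $\lambda \ge k$ and of order $k/\lambda$ when $\lambda \le k$. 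Substituting $\beta_T = \tilde{O}(\max(\sqrt d,\sqrt\lambda))$, this term is $\tilde{O}(\max(d,\sqrt{d\lambda})\sqrt{dkT})$ for $\lambda \ge k$ and $\tilde{O}(\max(d,\sqrt{d\lambda})\sqrt{dk^2T/\lambda})$ for $\lambda \le k$ --- precisely the two regimes in the statement.

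The hard part --- and the step I expect to be the main obstacle --- is the first (``optimism'') term $\sum_{i\in I_t^*}(\theta^{*\top}x_t(i) - \hat r_t(i))$, which features confidence widths of the \emph{optimal} arms, which are never played and hence escape the potential lemma. This is where arm-wise randomization is both essential and delicate. Gaussian anti-concentration gives, for each \emph{individual} arm $i$, $\Pr[\hat r_t(i) \ge \theta^{*\top}x_t(i)\mid\His{t-1}] \ge p$ for an absolute constant $p$, since $\hat r_t(i)$ is Gaussian with mean $\hat\theta_t^\top x_t(i)$ lying within one standard deviation $v_t\|x_t(i)\|_{V_{t-1}^{-1}} \ge \beta_t\|x_t(i)\|_{V_{t-1}^{-1}}$ of $\theta^{*\top}x_t(i)$; crucially, these events are \emph{independent across arms}. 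Asking instead that the whole super arm $I_t^*$ be simultaneously optimistic would cost a factor $p^{-k}$ and is useless; with round-wise randomization, by contrast, a single sample $\tilde\theta_t$ is optimistic for the \emph{sum} $\sum_{i\in I_t^*}\tilde\theta_t^\top x_t(i)$ with constant probability, which is why that case is easier. For the arm-wise case I would argue position-by-position --- sort $I_t$ by $\hat r_t$ and match its $j$-th arm against the $j$-th best optimal arm, using that the $j$-th largest $\hat r_t$ over all arms dominates the $j$-th largest over $I_t^*$ --- and then exploit the independence of the $k$ samples attached to $I_t^*$ (a Chernoff bound shows a constant fraction of them are optimistic with high probability), before transferring the optimal-arm widths to played-arm widths via a regret-transfer/martingale argument in the style of Abeille--Lazaric, so that the optimism term is bounded by a constant multiple of the already-controlled $\mathbb{E}[\,\sum_{i\in I_t}\|x_t(i)\|_{V_{t-1}^{-1}}\mid\His{t-1}]$ up to the $\tilde{O}(\sqrt d\,\beta_t)$ inflation. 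Carrying out this transfer without incurring any $k$-dependent (let alone exponential) loss is the crux; once it is in place, collecting the logarithmic factors into $\tilde{O}(\cdot)$ and splitting on $\lambda \le k$ versus $\lambda \ge k$ is routine.
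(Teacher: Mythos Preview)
Your overall scaffolding matches the paper: the three-term decomposition (the paper writes $R(T)=R^{opt}(T)+R^{alg}(T)+R^{est}(T)$, with your ``estimation'' piece being $R^{alg}+R^{est}$), the two high-probability events (self-normalized concentration of $\hat\theta_t$; Gaussian concentration of each $\tilde\theta_t(i)$ with a union bound over $N$ arms and $T$ rounds, which is exactly where $\delta/(4NT)$ enters), and the elliptical-potential bound on $\sum_t\sum_{i\in I_t}\|x_t(i)\|_{V_{t-1}^{-1}}$ together with the $\lambda\lessgtr k$ case split. The treatment of $R^{alg}$ and $R^{est}$ goes through just as you indicate.

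The genuine gap is the one you yourself flag in the optimism term, and the paper closes it by a device you do not use: it \emph{lifts} the problem to $\mathbb{R}^{dN}$. Encode each super arm $I$ as a block vector $x_t(I)\in\mathbb{R}^{dN}$ whose $i$-th $d$-block equals $x_t(i)$ for $i\in I$ and is zero otherwise; stack the arm-wise samples into a single vector $\tilde\theta_t=(\tilde\theta_t(1),\dots,\tilde\theta_t(N))\in\mathbb{R}^{dN}$; and define one convex function $J_t(\theta)=\max_{I\in S_t}\sum_{i\in I}\theta(i)^\top x_t(i)$ on $\mathbb{R}^{dN}$. Then $R_t^{opt}=J_t(\theta^*([N]))-J_t(\tilde\theta_t)$, and the Abeille--Lazaric argument runs in the lifted space: replace $J_t(\theta^*([N]))$ by $\mathbb{E}[J_t(\tilde\theta)\mid\tilde\theta\in\Theta_t^{opt}]$ where $\Theta_t^{opt}=\{\theta:J_t(\theta)\ge J_t(\theta^*([N])),\ \theta(i)\in\mathcal E_t\ \forall i\}$; use convexity and $\nabla J_t(\tilde\theta)=x_t(\tilde I)$ to obtain $J_t(\tilde\theta)-\inf_{\theta(i)\in\mathcal E_t} J_t(\theta)\le 2\gamma_t(\delta')\sum_{i\in\tilde I}\|x_t(i)\|_{V_{t-1}^{-1}}$; uncondition by dividing by $P(\tilde\theta\in\Theta_t^{opt}\mid\mathcal F_t,\hat E_t)$, which the paper lower-bounds by an absolute constant via the anti-concentration lemma; and finish with Azuma to swap the random $\tilde I$ for the played $I_t$. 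The point is that the lifting turns $N$ independent arm-wise samples into a \emph{single} Thompson sample in a higher-dimensional linear bandit, so the combinatorial structure is absorbed into the convex function $J_t$ and its gradient. This bypasses your position-by-position matching and Chernoff-counting entirely: there is no need to ask how many arms of $I_t^*$ are individually optimistic, because optimism is one event in $\mathbb{R}^{dN}$, and the transfer from optimal-arm to played-arm confidence widths comes for free from $\nabla J_t$. Your route, by contrast, leaves exactly the ``crux'' unresolved---even if a constant fraction of the arms in $I_t^*$ are individually optimistic, the pessimistic ones can dominate $\sum_{i\in I_t^*}(\theta^{*\top}x_t(i)-\hat r_t(i))$, and sorting/matching does not evidently move their widths onto played arms without a $k$-dependent penalty.
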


\begin{theorem}[Regret bound for the TS algorithm with round-wise randomization]\label{thm:rwts_regret}
  When we set parameters $\lambda$ and $\{ v_t \}_{t=1}^T$ so that $\lambda \ge 1$ and $v_t = \beta_t(\delta / (4T))$ for $t \in [T]$,
  with probability at least $1 - \delta$,
  the regret for TS algorithm with round-wise randomization is bounded as
  \begin{empheq}[left={R(T) = \empheqlbrace}]{alignat=2}
    & \tilde{O}\left( \max\left( d, \sqrt{d \lambda} \right) \sqrt{dk^2T / \lambda} \right) && \quad (\lambda \le k) \nonumber \\
    & \tilde{O}\left( \max\left( d, \sqrt{d \lambda} \right) \sqrt{dkT} \right) && \quad (\lambda \ge k). \nonumber
  \end{empheq}
\end{theorem}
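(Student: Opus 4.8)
The plan is to follow the standard Thompson-sampling analysis for linear bandits \cite{agrawal13b,abeille17}, adapted to the combinatorial semi-bandit setting, while exploiting the fact that the round-wise analysis is actually \emph{easier} than the arm-wise one. First I would set up the usual confidence ellipsoid: under the $R$-sub-Gaussian noise assumption and with $V_{t-1} = \lambda I + \sum_{\tau < t}\sum_{i \in I_\tau} x_\tau(i)x_\tau(i)^\top$, a self-normalized martingale bound gives that, with probability at least $1 - \delta/2$, the true parameter satisfies $\|\hat\theta_t - \theta^*\|_{V_{t-1}} \le \beta_t(\delta/(4T))$ simultaneously for all $t \in [T]$. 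Call this the ``good'' event $E^{\hat\theta}$. Separately, since $\tilde\theta_t - \hat\theta_t \sim \mathcal{N}(0, v_t^2 V_{t-1}^{-1})$, Gaussian concentration of the norm $\|\tilde\theta_t - \hat\theta_t\|_{V_{t-1}}$ (a $\chi$-type tail) gives that $\|\tilde\theta_t - \hat\theta_t\|_{V_{t-1}} = \tilde O(v_t \sqrt{d})$ for all $t$ with probability at least $1 - \delta/2$; here the union bound is only over the $T$ rounds, which is why $v_t = \beta_t(\delta/(4T))$ suffices (the arm-wise version needs $\delta/(4NT)$ because it unions over arms as well). On the intersection of these events, $\|\tilde\theta_t - \theta^*\|_{V_{t-1}} = \tilde O(\beta_t(\delta/(4T))\sqrt{d}) =: \gamma_t$.

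Next I would decompose the per-round regret. Because $I_t = \argmax_{I \in S_t}\sum_{i \in I}\tilde\theta_t^\top x_t(i)$ and $I_t^*$ is the analogous argmax for $\theta^*$, the instantaneous regret $\sum_{i \in I_t^*}\theta^{*\top}x_t(i) - \sum_{i \in I_t}\theta^{*\top}x_t(i)$ telescopes through the sampled objective:
\begin{align*}
  \sum_{i \in I_t^*}\theta^{*\top}x_t(i) - \sum_{i \in I_t}\tilde\theta_t^\top x_t(i) &\le \sum_{i \in I_t^*}(\theta^* - \tilde\theta_t)^\top x_t(i), \\
  \sum_{i \in I_t}(\tilde\theta_t - \theta^*)^\top x_t(i) &\le \sum_{i \in I_t}\gamma_t \|x_t(i)\|_{V_{t-1}^{-1}}.
\end{align*}
The first inequality uses optimality of $I_t$ for the sampled objective; the term $\sum_{i \in I_t^*}(\theta^* - \tilde\theta_t)^\top x_t(i)$ is the ``optimism gap'' and must be controlled in expectation (over the Gaussian draw) using an anti-concentration argument — the sampled value of the optimal super arm is, with constant probability, at least its true value, so one pays only a constant multiplicative factor, exactly as in \cite{agrawal13b}. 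Summing the second inequality over $t$ and applying Cauchy--Schwarz together with the elliptical-potential lemma $\sum_{t}\sum_{i \in I_t}\|x_t(i)\|_{V_{t-1}^{-1}}^2 = O\!\big(d\log(1 + kT/\lambda)\big)$ yields $\sum_t \sum_{i \in I_t}\gamma_t\|x_t(i)\|_{V_{t-1}^{-1}} = \tilde O\big(\gamma_T \sqrt{kT \cdot d}\big)$, where the extra $\sqrt{k}$ inside comes from the $|I_t| \le k$ terms per round in the Cauchy--Schwarz step. Plugging in $\gamma_T = \tilde O(\beta_T \sqrt d) = \tilde O(\max(\sqrt d, \sqrt\lambda S)\sqrt d)$ and splitting on whether $\lambda \le k$ or $\lambda \ge k$ (which controls $\log(1 + kT/\lambda)$ and the $1/\sqrt\lambda$ factor in the elliptical potential relative to $k$) reproduces the two stated bounds $\tilde O(\max(d,\sqrt{d\lambda})\sqrt{dk^2T/\lambda})$ and $\tilde O(\max(d,\sqrt{d\lambda})\sqrt{dkT})$.

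The main obstacle I expect is the anti-concentration / optimism step: bounding $\E[\sum_{i \in I_t^*}(\theta^* - \tilde\theta_t)^\top x_t(i)]$ correctly in the combinatorial setting. Unlike plain linear bandits, here the relevant quantity is a \emph{sum} of $k$ inner products sharing the \emph{single} round-wise sample $\tilde\theta_t$, so one works with the scalar $\sum_{i \in I_t^*}\tilde\theta_t^\top x_t(i)$, which is Gaussian with variance $v_t^2 \,\big\|\sum_{i \in I_t^*}x_t(i)\big\|_{V_{t-1}^{-1}}^2$; a Gaussian lower-tail bound shows it exceeds its mean (the true optimal value, up to the $\beta_t$ slack) with probability bounded below by a constant, and a standard ``saturated vs.\ unsaturated super arm'' argument then converts this into the per-round bound. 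The round-wise case is cleaner than arm-wise here precisely because a single shared sample keeps this a one-dimensional event; the remaining work — verifying the elliptical potential lemma holds with the $k$-per-round updates to $V_t$, and tracking constants through the $\lambda \le k$ versus $\lambda \ge k$ split — is routine. A secondary subtlety is that the regret is defined with an expectation over rewards, so the high-probability statement is over the algorithm's internal randomness and the noise on the good events $E^{\hat\theta}$ and the Gaussian-norm event, with the optimism term handled in conditional expectation; assembling these pieces with a final union bound of total failure probability $\delta$ completes the argument.
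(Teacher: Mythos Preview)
Your proposal is correct and follows the same high-level scheme as the paper: a three-term regret decomposition (optimism gap, sampling deviation, estimation error), high-probability confidence events for $\hat\theta_t$ and $\tilde\theta_t$ with the union bound taken over $T$ rounds rather than $NT$ (exactly the simplification the paper singles out for round-wise versus arm-wise), and the combinatorial elliptical-potential bounds that yield the $\lambda\le k$ / $\lambda\ge k$ split.

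The one substantive methodological difference is in the optimism term. You propose the Agrawal--Goyal route: anti-concentration on the scalar $\sum_{i\in I_t^*}\tilde\theta_t^\top x_t(i)$ followed by a saturated/unsaturated classification of super arms. The paper instead uses the Abeille--Lazaric machinery: it sets $\mathcal{X}_t=\{\sum_{i\in I}x_t(i):I\in S_t\}$ so that $J_t(\theta)=\max_{I\in S_t}\sum_{i\in I}\theta^\top x_t(i)$ is a convex function on $\R{d}$, writes $R^{opt}_t=J_t(\theta^*)-J_t(\tilde\theta_t)$, bounds it via convexity and conditioning on $\tilde\theta\in\Theta^{opt}_t$, removes the conditioning at cost $1/p$ by anti-concentration, and finally applies Azuma's inequality to pass from the conditional expectation $\E[\sum_{i\in\tilde I}\|x_t(i)\|_{V_{t-1}^{-1}}\mid\mathcal F_t]$ to the realized $\sum_{i\in I_t}\|x_t(i)\|_{V_{t-1}^{-1}}$. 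Both routes are valid and yield the same bound; the Abeille--Lazaric approach is arguably cleaner here because it sidesteps defining ``saturated'' over exponentially many super arms, whereas your approach keeps the argument closer to the original TS analysis and makes the one-dimensionality of the round-wise anti-concentration explicit. On the elliptical-potential side, note that the paper does prove dedicated lemmas for the $k$-per-round update (using an auxiliary matrix $\overline V_t$ when $\lambda\le k$); your claim that this is ``routine'' is fair, but it is not literally the standard single-arm lemma.
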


For the PC${}^2$UCB and the C${}^2$UCB, we can obtain the following regret bounds.\footnote{Compared to Theorem 4.1 in Qin, Chen, and Zhu \cite{qin14}, the theorem is slightly extended, but one can obtain the regret bound by setting $c = 0$ in the proof of \thmref{thm:pc2ucb}.}

\begin{theorem}[Regret bound for PC${}^2$UCB]\label{thm:pc2ucb}
  For $c = O(1)$, $\lambda \ge 1$ and $\alpha_t = \beta_t(\delta)$, with probability at least $1 - \delta$, the regret for the PC${}^2$UCB is bounded as
  \begin{empheq}[left={R(T) = \empheqlbrace}]{alignat=2}
    & \tilde{O}\left( \max\left( \sqrt{d}, \sqrt{\lambda} \right) \sqrt{dk^2T / \lambda} \right) && \quad (\lambda \le k) \nonumber \\
    & \tilde{O}\left( \max\left( \sqrt{d}, \sqrt{\lambda} \right) \sqrt{dkT} \right) && \quad (\lambda \ge k). \nonumber
  \end{empheq}
\end{theorem}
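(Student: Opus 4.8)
The plan is to run an optimism-based (UCB) argument in the style of Qin, Chen, and Zhu's analysis of C${}^2$UCB~\cite{qin14}, with the arm-wise perturbation entering only through a harmless $O(1)$ multiplicative factor; throughout, write $\|z\|_A = \sqrt{z^\top A z}$ for positive definite $A$. The first step is the ``good event'': with probability at least $1-\delta$, the ridge estimate stays close to $\theta^*$ in the data norm, $\|\hat\theta_t - \theta^*\|_{V_{t-1}} \le \beta_t(\delta)$ for all $t \in [T]$ simultaneously. This follows from the self-normalized martingale tail bound of Abbasi-Yadkori et al.~\cite{abbasi11}, applied to the at most $kt$ noise terms accumulated through round $t$; the semi-bandit structure enters only through $\det V_{t-1} \le (\lambda + kt/d)^d$, which is exactly what produces the $\log(1 + kt/\lambda)$ inside $\beta_t(\delta)$. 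Since this inequality is already uniform in $t$, no union bound over rounds is needed here (hence the plain $\delta$, in contrast to the $\delta/(4NT)$ appearing in \thmref{thm:ts_regret}).

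Next, on the good event, $|\hat\theta_t^\top x_t(i) - {\theta^*}^\top x_t(i)| \le \beta_t(\delta)\|x_t(i)\|_{V_{t-1}^{-1}} = \alpha_t\|x_t(i)\|_{V_{t-1}^{-1}}$, and since $\tilde\alpha_t = (1 + \tilde c_t(i))\alpha_t$ lies in $[\alpha_t, (1+c)\alpha_t]$ by $0 \le \tilde c_t(i) \le c$, the perturbed estimate satisfies, for every arm $i$,
\[
  {\theta^*}^\top x_t(i) \;\le\; \hat r_t(i) \;\le\; {\theta^*}^\top x_t(i) + (2+c)\,\alpha_t\,\|x_t(i)\|_{V_{t-1}^{-1}} ,
\]
i.e.\ $\hat r_t(i)$ remains an upper confidence bound and over-estimates by at most $(2+c)\alpha_t\|x_t(i)\|_{V_{t-1}^{-1}}$. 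Because $I_t \in \argmax_{I \in S_t}\sum_{i \in I}\hat r_t(i)$ and optimism holds arm-wise, the chain $\sum_{i \in I_t^*}{\theta^*}^\top x_t(i) \le \sum_{i \in I_t^*}\hat r_t(i) \le \sum_{i \in I_t}\hat r_t(i)$ bounds the round-$t$ (pseudo-)regret $\sum_{i \in I_t^*}{\theta^*}^\top x_t(i) - \sum_{i \in I_t}{\theta^*}^\top x_t(i)$ by $(2+c)\,\alpha_t\sum_{i \in I_t}\|x_t(i)\|_{V_{t-1}^{-1}}$; taking $c = 0$ recovers the C${}^2$UCB guarantee and explains the footnote's remark.

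Summing over $t$, using that $\alpha_t = \beta_t(\delta) \le \beta_T(\delta)$ is nondecreasing, and applying Cauchy--Schwarz with $|I_t| \le k$,
\[
  R(T) \;\le\; (2+c)\,\beta_T(\delta)\,\sqrt{kT}\;\sqrt{\sum_{t \in [T]}\sum_{i \in I_t}\|x_t(i)\|_{V_{t-1}^{-1}}^2}\, .
\]
For the remaining sum I would use the identity $\sum_{i \in I_t}\|x_t(i)\|_{V_{t-1}^{-1}}^2 = \tr\!\big(V_{t-1}^{-1}(V_t - V_{t-1})\big)$ together with a potential argument: since $V_t - V_{t-1} = \sum_{i \in I_t}x_t(i)x_t(i)^\top$ has operator norm at most $k$ while $V_{t-1} \succeq \lambda I$, the eigenvalues of $V_{t-1}^{-1/2}(V_t - V_{t-1})V_{t-1}^{-1/2}$ are all at most $k/\lambda$, whence $\tr(V_{t-1}^{-1}(V_t - V_{t-1})) \le \frac{k/\lambda}{\log(1+k/\lambda)}\big(\log\det V_t - \log\det V_{t-1}\big)$ by concavity of $\log(1+\cdot)$; this telescopes and, with $\det V_T \le (\lambda + kT/d)^d$, gives $\sum_{t}\sum_{i \in I_t}\|x_t(i)\|_{V_{t-1}^{-1}}^2 \le \frac{k/\lambda}{\log(1+k/\lambda)}\,d\log(1 + kT/(d\lambda))$. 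The prefactor $\frac{k/\lambda}{\log(1+k/\lambda)}$ is $O(1)$ when $\lambda \ge k$ and $O(k/\lambda)$ when $\lambda \le k$, which is precisely the origin of the two-case statement; substituting the bound $\beta_T(\delta) = \tilde O(\max(\sqrt d,\sqrt\lambda))$ and simplifying yields $R(T) = \tilde O(\max(\sqrt d,\sqrt\lambda)\sqrt{dkT})$ for $\lambda \ge k$ and $R(T) = \tilde O(\max(\sqrt d,\sqrt\lambda)\sqrt{dk^2T/\lambda})$ for $\lambda \le k$. (If one prefers to bound the realized cumulative reward rather than its conditional mean, one further Azuma--Hoeffding step on $\sum_t\sum_{i \in I_t}(r_t(i) - {\theta^*}^\top x_t(i))$ adds only a lower-order $\tilde O(\sqrt{kT})$ term.)

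The only genuinely delicate point is this combinatorial potential lemma: because $V$ is not refreshed between the (up to) $k$ arm-pulls within a round, the textbook per-step elliptical-potential argument does not apply verbatim, and the crude bound $\|x_t(i)\|_{V_{t-1}^{-1}}^2 \le 1/\lambda$ is far too lossy --- it is the trace/$\log\det$ comparison above that extracts the correct $\lambda$-dependence, $\sqrt{k^2T/\lambda}$ when $\lambda \le k$ versus $\sqrt{kT}$ when $\lambda \ge k$. The concentration and optimism steps are routine, and, as noted, the perturbation costs nothing beyond the explicit constant $2 + c = O(1)$.
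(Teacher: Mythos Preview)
Your proof is correct and follows the same macro-level outline as the paper --- self-normalized confidence bound, arm-wise optimism giving $R^{opt}\le 0$, then controlling the remaining exploration cost by $\beta_T(\delta)\sum_{t}\sum_{i\in I_t}\|x_t(i)\|_{V_{t-1}^{-1}}$ --- but your treatment of the combinatorial elliptical-potential step is genuinely different from the paper's. The paper splits into two separate lemmas: for $\lambda\ge k$ it processes the $k$ arms within a round \emph{one at a time}, peeling off rank-one updates and using $\log(1+u)\ge u/2$ for $u\le 1$; for $\lambda\le k$ it introduces the auxiliary rescaled matrix $\overline{V}_t=\lambda I+\tfrac{\lambda}{k}\sum_{s\le t}\sum_i x_s(i)x_s(i)^\top$ so that each scaled update again has small norm, and then transfers the bound back to $V_{t-1}^{-1}$ via $\overline{V}_{t-1}\preceq V_{t-1}$. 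Your route instead treats the whole round at once via $\sum_{i\in I_t}\|x_t(i)\|_{V_{t-1}^{-1}}^2=\tr\!\big(V_{t-1}^{-1}(V_t-V_{t-1})\big)$, bounds the eigenvalues of $V_{t-1}^{-1/2}(V_t-V_{t-1})V_{t-1}^{-1/2}$ by $k/\lambda$, and uses the concavity inequality $u\le \tfrac{a}{\log(1+a)}\log(1+u)$ on $[0,a]$ to telescope against $\log\det V_t$. This yields both regimes from a single argument, with the prefactor $\tfrac{k/\lambda}{\log(1+k/\lambda)}$ producing the case split automatically; it is slightly cleaner than the paper's two-lemma approach and avoids the auxiliary $\overline{V}_t$, at no cost in the final constants. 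The decomposition difference (your direct per-round bound $(2+c)\alpha_t\sum_i\|x_t(i)\|_{V_{t-1}^{-1}}$ versus the paper's $R^{opt}+R^{alg}+R^{est}$ split) is purely cosmetic.
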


\begin{theorem}[Theorem 4.1 in Qin, Chen, and Zhu \cite{qin14}]\label{thm:c2ucb}
  For the same parameters in \thmref{thm:pc2ucb}, with probability at least $1 - \delta$, the regret for the C${}^2$UCB is bounded as
  \begin{empheq}[left={R(T) = \empheqlbrace}]{alignat=2}
    & \tilde{O}\left( \max\left( \sqrt{d}, \sqrt{\lambda} \right) \sqrt{dk^2T / \lambda} \right) && \quad (\lambda \le k) \nonumber \\
    & \tilde{O}\left( \max\left( \sqrt{d}, \sqrt{\lambda} \right) \sqrt{dkT} \right) && \quad (\lambda \ge k). \nonumber
  \end{empheq}
\end{theorem}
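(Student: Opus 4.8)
This statement follows from the proof of \thmref{thm:pc2ucb} by setting $c = 0$ (so that $\tilde\alpha_t = \alpha_t$ in every step), so the plan is to run the standard optimism-based analysis of linear-UCB, adapted to combinatorial semi-bandit feedback in which up to $k$ feature--reward pairs enter the estimator per round. First I would establish a confidence ellipsoid: applying the self-normalized martingale bound of Abbasi-Yadkori \textit{et al.}~\cite{abbasi11} to the sequence formed by processing the at most $k$ per-round noise terms $\eta_\tau(i)x_\tau(i)$ in a fixed order, and bounding $\det V_{t-1} \le (\lambda + k(t-1)/d)^d$ by its trace, I would obtain an event $\mathcal E$ of probability at least $1-\delta$ on which $\|\hat\theta_t - \theta^*\|_{V_{t-1}} \le \beta_t(\delta)$ holds simultaneously for all $t \in [T]$; the $\log(1 + kt/\lambda)$ term inside $\beta_t(\delta)$ is precisely what this determinant/trace bound contributes.

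On $\mathcal E$, Cauchy--Schwarz gives $|(\hat\theta_t - \theta^*)^\top x_t(i)| \le \beta_t(\delta)\,\|x_t(i)\|_{V_{t-1}^{-1}}$, so with $\alpha_t = \beta_t(\delta)$ (and, for PC${}^2$UCB, $\tilde\alpha_t = (1+\tilde c_t(i))\alpha_t$, which lies in $[\alpha_t,(1+c)\alpha_t]$ since $0 \le \tilde c_t(i) \le c = O(1)$) one gets the sandwich ${\theta^*}^\top x_t(i) \le \hat r_t(i) \le {\theta^*}^\top x_t(i) + 2\tilde\alpha_t\|x_t(i)\|_{V_{t-1}^{-1}}$. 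Because $I_t$ maximizes $\sum_{i\in I}\hat r_t(i)$ while $I_t^*$ maximizes $\sum_{i\in I}{\theta^*}^\top x_t(i)$, the per-round regret telescopes: $\sum_{i\in I_t^*}{\theta^*}^\top x_t(i) - \sum_{i\in I_t}{\theta^*}^\top x_t(i) \le \sum_{i\in I_t}(\hat r_t(i) - {\theta^*}^\top x_t(i)) \le 2\tilde\alpha_t \sum_{i\in I_t}\|x_t(i)\|_{V_{t-1}^{-1}}$. Since $\lambda \ge 1$ and $\|x_t(i)\|_2 \le 1$ force $\|x_t(i)\|_{V_{t-1}^{-1}}^2 \le 1/\lambda \le 1$, no truncation is needed; summing over $t$, using that $\beta_t(\delta)$ is nondecreasing in $t$, and Cauchy--Schwarz over the at most $kT$ summands gives $R(T) \le O(\beta_T(\delta))\sqrt{kT\sum_{t\in[T]}\sum_{i\in I_t}\|x_t(i)\|_{V_{t-1}^{-1}}^2}$ on $\mathcal E$.

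The crux is an elliptical potential bound that accounts for all arms of $I_t$ sharing the single matrix $V_{t-1}$ rather than the matrix being updated one arm at a time. Writing $I_t = \{i_1,\dots,i_m\}$ with $m \le k$, set $W_0 = V_{t-1}$ and $W_j = W_{j-1} + x_t(i_j)x_t(i_j)^\top$, so $W_m = V_t$. From $V_{t-1} \succeq \lambda I$ and $\|x_t(i)\|_2 \le 1$ one gets $W_{j-1} \preceq V_{t-1} + kI \preceq (1+k/\lambda)V_{t-1}$, hence $\|x_t(i_j)\|_{V_{t-1}^{-1}}^2 \le (1+k/\lambda)\|x_t(i_j)\|_{W_{j-1}^{-1}}^2$. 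The elliptical potential lemma, applied round by round to the subsequences $(W_0,\dots,W_m)$ and then summed, gives $\sum_{t\in[T]}\sum_{i\in I_t}\|x_t(i)\|_{V_{t-1}^{-1}}^2 \le 2(1+k/\lambda)\log\frac{\det V_T}{\det V_0} \le 2(1+k/\lambda)\,d\log\!\left(1 + \tfrac{kT}{\lambda d}\right)$, again using the trace bound on $\det V_T$.

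Finally, plugging $\beta_T(\delta) = \tilde O(\max(\sqrt d,\sqrt\lambda))$ and the potential bound into the previous display gives $R(T) = \tilde O(\max(\sqrt d,\sqrt\lambda)\sqrt{dkT(1+k/\lambda)})$ on $\mathcal E$; distinguishing $1+k/\lambda = \Theta(k/\lambda)$ when $\lambda \le k$ from $1+k/\lambda = \Theta(1)$ when $\lambda \ge k$ yields the two cases of \thmref{thm:pc2ucb}, and setting $c = 0$ gives exactly the claimed C${}^2$UCB bound. I expect the main obstacle to be exactly the elliptical potential step with simultaneous updates of rank at most $k$ — in particular pinning down the factor $1+k/\lambda$, which is what drives the $k^2/\lambda$-versus-$k$ dichotomy between the two regimes — whereas the concentration and optimism parts are routine.
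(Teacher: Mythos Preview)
Your proposal is correct and follows the same optimism-based outline as the paper: the paper also derives \thmref{thm:c2ucb} by specializing the PC${}^2$UCB proof to $c=0$, decomposes the regret (into $R^{opt}+R^{alg}+R^{est}$, which for UCB is just a relabeling of your sandwich argument since $R^{opt}\le 0$), invokes \lemref{lem:confidence} for the confidence ellipsoid, and then reduces everything to bounding $\sum_{t}\sum_{i\in I_t}\|x_t(i)\|_{V_{t-1}^{-1}}$.

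The one genuine difference is how the elliptical potential with $k$ simultaneous updates is handled. You compare the intermediate matrices $W_{j-1}$ to $(1+k/\lambda)V_{t-1}$ and thereby obtain a single bound $\sum_{t,i}\|x_t(i)\|_{V_{t-1}^{-1}}^2 \le 2(1+k/\lambda)\log(\det V_T/\det V_0)$ that covers both regimes at once. The paper instead treats the two regimes separately: for $\lambda\ge k$ it runs a per-arm telescoping argument directly on $V_t$ (\lemref{lem:bound_x}), while for $\lambda\le k$ it introduces the rescaled matrix $\overline{V}_t=\lambda I+\frac{\lambda}{k}\sum x x^\top$, proves the potential bound for $\|\cdot\|_{\overline{V}_{t-1}^{-1}}$ (\lemref{lem:bound_x_using_lam}), and uses $\|v\|_{\overline{V}_{t-1}}\le\|v\|_{V_{t-1}}$ in the Cauchy--Schwarz step. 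Your unified $(1+k/\lambda)$ trick is a bit more elementary and avoids the auxiliary matrix; the paper's version is more modular in that Lemmas~\ref{lem:bound_x_using_lam} and~\ref{lem:bound_x} are stated once and reused verbatim in the TS proofs. Both yield exactly the same $\tilde O$ bounds.
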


The regret bound in \thmref{thm:pc2ucb} matches the regret bound in \thmref{thm:c2ucb}, which is the best theoretical guarantee among known regret bounds for CLB.
On the other hand, the regret bounds in \thmref{thm:ts_regret} and \thmref{thm:rwts_regret} have a gap from that in \thmref{thm:c2ucb}.
This gap is well known as the gap between UCB and TS in LB \cite{agrawal13b,abeille17}.

\section{Numerical Experiments}
\label{sec:experiment}

\subsection{Setup}
In these numerical experiments,
we consider two types of the CLS problem.

\subsubsection{Artificial Clustered Cases}
To show the impact of clustered cases,
we consider artificial clustered cases.
In this setting,
we handle $d-1$ types (clusters) of feature vectors parameterized by $0 < \theta \le \pi/2$.
Each feature vector has two non-zero elements:
One is the first element, and its value is $\cos\theta$;
the other is the $i$-th element, and its value is $\sin\theta$,
where $2 \le i \le d$.
The large $\theta$ implies that
choosing the feature vectors of a cluster gives little information about the rewards when choosing the feature vectors of other clusters.
Thus, we expect choosing the feature vectors from one cluster to lead to poor performance in such cases.

In this experiment,
we fix $\theta^*$ determined randomly so that $\|\theta^*\|_2 = 1$.
The reward $r$ is either $1$ or $-1$ and satisfies $E[r \mid x] = {\theta^*}^\top x$,
where $x$ is a feature vector.
We set $d = 11$, $N = 2000$, $k = 100$, and $T = 10$.

\subsubsection{Sending Promotion Problem}
We consider the sending promotion problem discussed in \secref{sec:motivating_app}.
Let $M$ be the number of types of promotions.
In round $t \in [T]$, a learner observes feature vectors $\{ x_t(i) \}_{i \in [N]}$ such that $x_t(i) \in \R{d}$ for all $i \in [N]$.
Then, the learner chooses $k$ pairs of a feature vector and a promotion $j \in [M]$ and observes rewards $\{ r_t(i,j) \}_{i \in I_t(j)}$ associated with chosen feature vectors $\{ x_t(i) \}_{i \in I_t(j)}$, where $I_t(j)$ is the set of chosen indices with the promotion $j$.
The feature vectors represent customers to be sent a promotion.
Note that if the learner chooses a feature vector with a promotion $j \in [M]$ once, the learner cannot choose the same feature vector for a different promotion $j' \in [M]$.
In this experiment, we use $T = 20$, $N = 100k$, $M = 10$, and $d = 51$.

This model can be regarded as CLS.
We can construct feature vectors as follows:
\begin{align*}
  \tilde{x}_t(jN+i) = (0^\top, \dots, 0^\top, x_t(i)^\top, 0^\top, \dots, 0^\top)^\top \in \R{dM}
\end{align*}
for all $i \in [N]$ and $j \in [M]$, where the non-zero part of $\tilde{x}_t(jN+i)$ is from $(j-1)d+1$-th to $jd$-th entry.
Similarly, we can define possible super arms
$S = \{ \cup_{j \in [M]} I(j) \mid I(j) \in S(j) \}$,
where $S(j) = \{ I \subseteq \{ (j-1)N+1, \dots, jN \} \mid |I| = k \}$ for all $j \in [M]$.
We define $S_t = S$ for all $t \in [T]$.

For this problem, we use the MovieLens 20M dataset, which is a public dataset of ratings for movies by users of a movie recommendation service \cite{harper15}.
The dataset contains tuples of userID, movieID, and rating of the movie by the user.
Using this data, we construct feature vectors of users in a way similar to Qin, Chen, and Zhu \cite{qin14}.
We divide the dataset into training and test data as follows:
We randomly choose $M$ movies with ratings as the test data from movies rated by between 1,400--2,800 users.
The remaining data is the training data.
Then, we construct feature vectors of all users using low-rank approximation by SVD from the training data.
We use users' feature vectors with a constant factor as feature vectors for the problem.
To represent a changing environment,
in each round,
the feature vectors in the problem are chosen uniformly at random from all users.
If a user rated a movie, the corresponding reward is the rating on a 5-star scale with half-star increments;
otherwise, the corresponding reward is 0.

\subsection{Algorithms}
We compare 5 algorithms as baselines and our algorithms.
To tune parameters in the algorithms, we try 5 geometrically spaced values from $10^{-2}$ to $10^2$.

\subsubsection{Greedy Algorithm}
This algorithm can be viewed as a special case of C${}^2$UCB with $\alpha_t = 0$ except for the first round.
In the first round,
the estimated rewards are determined by sampling from a standard normal distribution independently.
We tune the parameter $\lambda$ for this algorithm.

\subsubsection{CombLinUCB and CombLinTS \cite{wen15}}
We tune $\lambda^2$, $\sigma^2$, and $c$ for CombLinUCB, and also tune $\lambda^2$ and $\sigma^2$ for CombLinTS.
Note that these two algorithms and the C${}^2$UCB have the same weakness, which is discussed in \secref{sec:stag}.

\subsubsection{C${}^2$UCB and PC${}^2$UCB}
For these two algorithms (\algoref{alg:c2ucb}), we set $\alpha_t = \alpha$ for all $t \in [T]$ and tune $\lambda$ and $\alpha$.
We also set $c = 1$ for PC${}^2$UCB.

\subsubsection{Round-wise and Arm-wise TS algorithms}
For these two algorithms (\algoref{alg:ts}), we set $v_t = v$ for all $t \in [T]$ and tune $\lambda$ and $v$.

\subsection{Results}
\figref{fig:clustered} and \tabref{tab:ml_summary} summarize the experimental results for the artificial and the MovieLens dataset, respectively.
In that table, the cumulative rewards of the algorithms, which are averaged over the trials are described.
We evaluate each algorithm by the best average reward among tuning parameter values across the 5 times trials.
In summary, PC${}^2$UCB outperforms other algorithms in several cases.
The detailed observations are discussed below.

\figref{fig:clustered} shows that the inner product of the feature vectors is a crucial property to the performance of the algorithms.
If two feature vectors in different clusters are almost orthogonal,
choosing feature vectors from a cluster gives almost no information on the rewards of feature vectors in other clusters.
Thus, as discussed in \secref{sec:motivating},
the proposed algorithms outperform the existing algorithms.
Note that the reason why the greedy algorithm performs well is that the algorithm chooses various feature vectors in the first round.

We can find the orthogonality of the feature vectors in the MovieLens dataset.
In \figref{fig:ml_clustered},
we show the distribution of the cosine similarity of feature vectors excluding the bias element.
From the figure,
we can see that many feature vectors are almost orthogonal.
Thus, the users in the MovieLens dataset have the clustered structure which affects the performances in early rounds.

In the experiments with the MovieLens dataset, the cumulative reward of our algorithms is almost 10 \% higher than that of among baseline algorithms (\tabref{tab:ml_summary}).
In contrast to the greedy algorithm in the experiments with the artificial dataset,
the greedy algorithm performs poorly in the experiments with the MovieLens dataset.
This result implies the difficulty of finding a good cluster of users in the MovieLens dataset.
\figref{fig:ml_avg} shows that our algorithms outperform the existing algorithms in early rounds.
From these results, we can conclude that our arm-wise randomization technique enables us to find a good cluster efficiently and balance the trade-off between exploration and exploitation.

\begin{figure}[bt]
  \centering
  \includegraphics[width=\linewidth,pagebox=cropbox,clip]{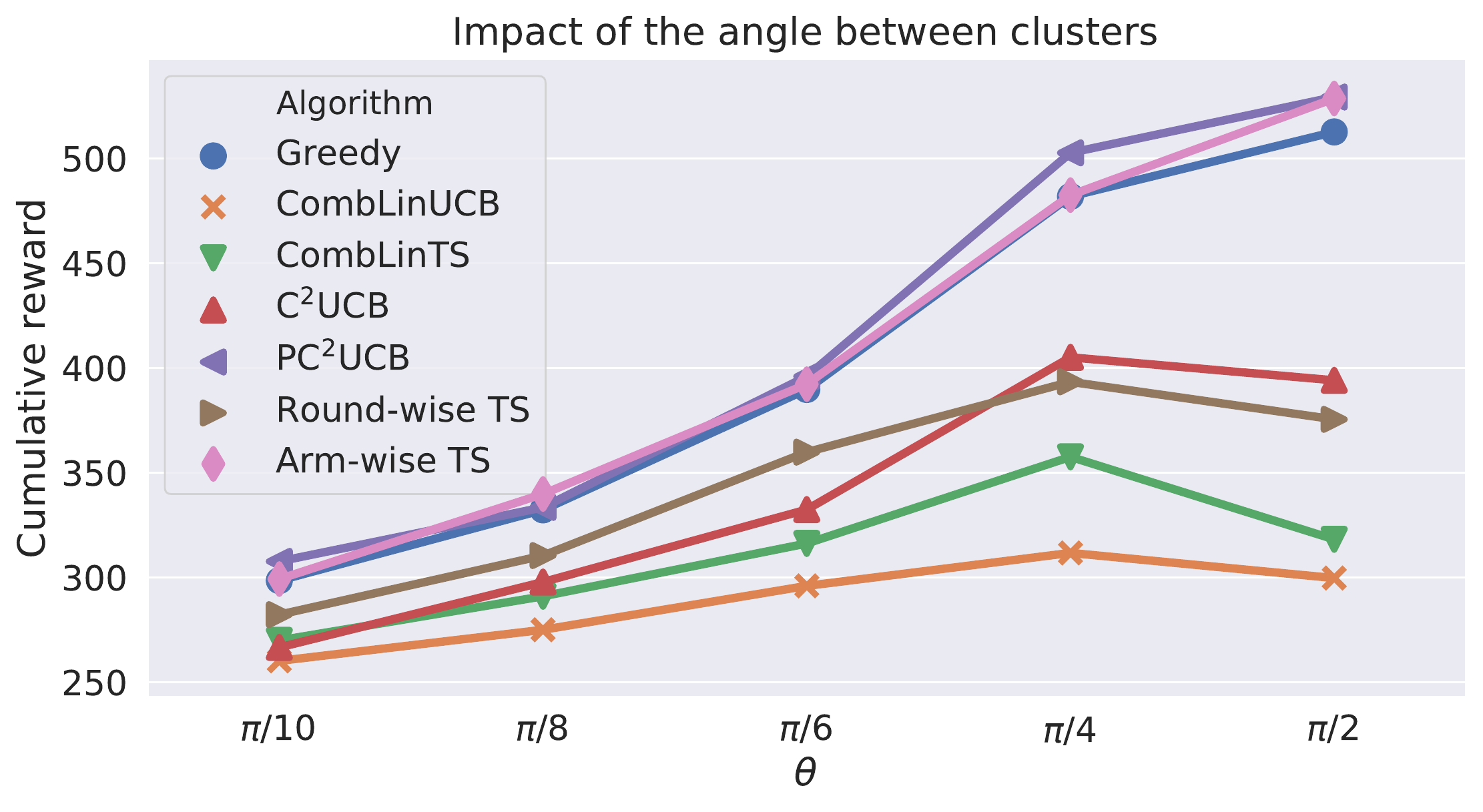}
  \caption{Cumulative rewards for the artificial dataset.}
  \label{fig:clustered}
\end{figure}

\begin{figure}[bt]
  \centering
  \includegraphics[width=\linewidth,pagebox=cropbox,clip]{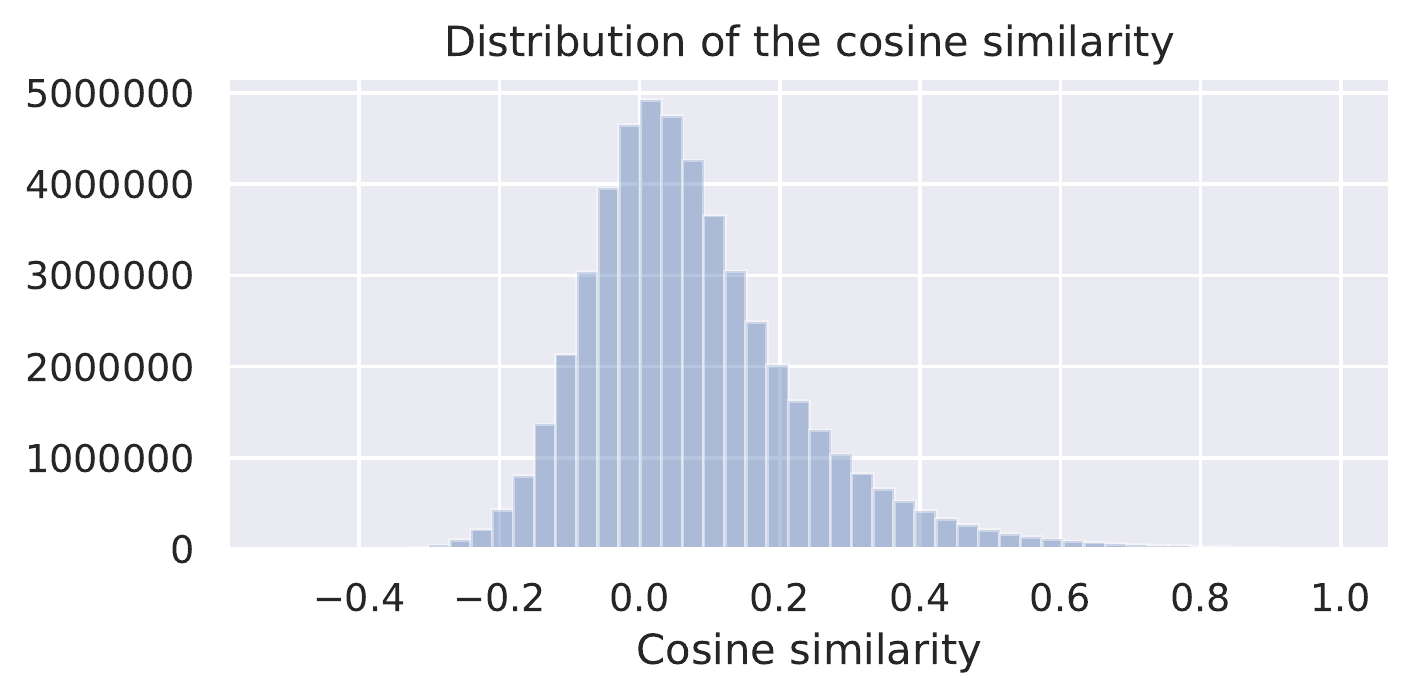}
  \caption{
    Distribution of the cosine similarity of feature vectors in the MovieLens dataset.
    We chose 10,000 users and compared every two vectors.
  }
  \label{fig:ml_clustered}
\end{figure}

\begin{table*}[tbp]
  \centering
  \caption{Cumulative rewards for the MovieLens dataset.}
  \begin{tabular}{cccccccc} \toprule
    Problem & Greedy & CombLinUCB & CombLinTS & C${}^2$UCB & PC${}^2$UCB & Round-wise TS & Arm-wise TS \\ \midrule
    $k = 50$ & 1588.6 & 2090.5 & 1749.2 & 2183.0 & \textbf{2304.5} & 2121.9 & 2304.3 \\
    $k = 100$ & 3765.1 & 5032.0 & 3850.3 & 4879.5 & \textbf{5359.8} & 4744.4 & 5298.0 \\
    $k = 150$ & 5471.6 & 7653.0 & 6212.0 & 7276.0 & \textbf{8210.6} & 7243.8 & 7945.0 \\
    $k = 200$ & 7939.6 & 10092.5 & 8918.4 & 9708.5 & \textbf{11147.2} & 9724.1 & 10928.2 \\ \bottomrule
  \end{tabular}
  \label{tab:ml_summary}
\end{table*}

\begin{figure}[bt]
  \centering
  \includegraphics[width=\linewidth,pagebox=cropbox,clip]{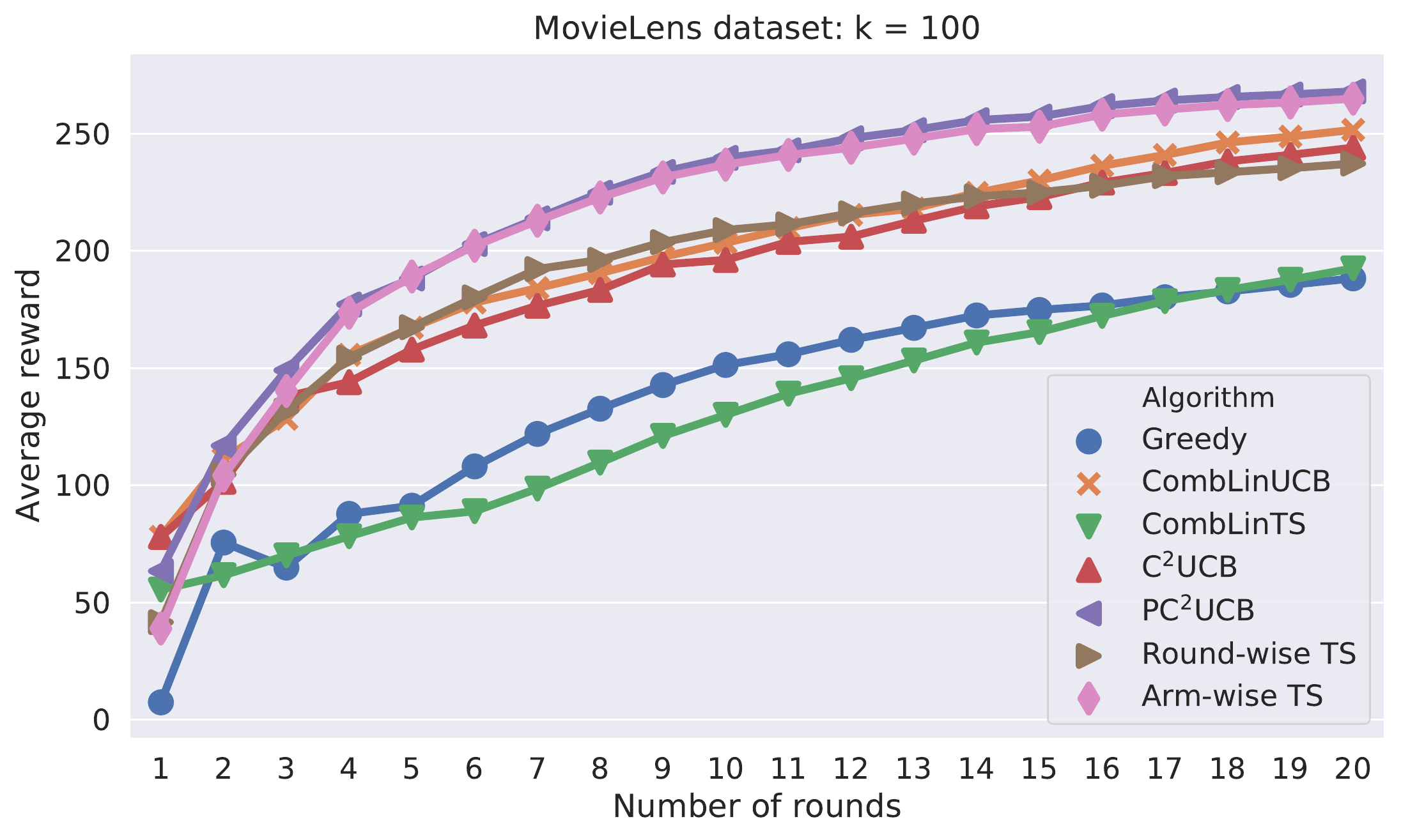}
  \caption{Average rewards for the MovieLens dataset, where the average reward in round $t$ is $\sum_{\tau \in [t]} \sum_{i \in I_{\tau}} r_{\tau}(i) / t$.}
  \label{fig:ml_avg}
\end{figure}

\section*{Acknowledgments}
We would like to thank Naoto Ohsaka, Tomoya Sakai, and Keigo Kimura for helpful discussions.
Shinji Ito was supported by JST, ACT-I, Grant Number JPMJPR18U5, Japan.

\bibliographystyle{./IEEEtran}
\bibliography{./IEEEabrv,./ccbandit}

\clearpage
\onecolumn
\appendix

\subsection{Preliminary}
\subsubsection{Definitions}
We can decompose the regret as
\begin{align*}
  R(T) = R^{opt}(T) + R^{alg}(T) + R^{est}(T),
\end{align*}
where
\begin{align*}
  R^{opt}(T) &= \sum_{t \in [T]} \left\{ \sum_{i \in I_t^*} {\theta^*}^\top x_t(i) - \sum_{i \in I_t} \hat{r}_t(i) \right\}, \\
  R^{alg}(T) &= \sum_{t \in [T]} \sum_{i \in I_t} \left( \hat{r}_t(i) - \hat{\theta}_t^\top x_t(i) \right), \quad \mathrm{and} \\
  R^{est}(T) &= \sum_{t \in [T]} \sum_{i \in I_t} \left( \hat{\theta}_t - \theta^* \right)^\top x_t(i).
\end{align*}
For both UCB and TS,
we bound $R^{opt}(T)$, $R^{alg}(T)$ and $R^{est}(T)$, respectively.

For the case $\lambda \le k$,
we use the following matrix instead of $V_t$ in our analysis:
\begin{align*}
  \overline{V}_t = \lambda I + \frac{\lambda}{k} \sum_{s \in [t]} \sum_{i \in I_s} x_s(i) x_s(i)^\top.
\end{align*}

\subsubsection{Known Results}
Our proof depends on the following known results:
\begin{lemma}[Theorem 2 in Abbasi-Yadkori, P{\'a}l and Szepesv{\'a}ri \cite{abbasi11}]\label{lem:confidence}
  Let $\{F_t\}_{t=0}^\infty$ be a filtration, $\{X_t\}_{t=1}^\infty$ be an $\R{d}$-valued stochastic process such that $X_t$ is $F_{t-1}$-measurable, $\{\eta_t\}_{t=1}^\infty$ be a real-valued stochastic process such that $\eta_t$ is $F_t$-measurable.
  Let $V = \lambda I$ be a positive definite matrix,
  $V_t = V + \sum_{s \in [t]} X_sX_s^\top$,
  $Y_t = \sum_{s \in [t]} {\theta^*}^\top X_s + \eta_s$ and
  $\hat{\theta}_t = V_{t-1}^{-1}Y_t$.
  Assume for all $t$ that $\eta_t$ is conditionally $R$-sub-Gaussian for some $R > 0$ and
  $\|\theta^*\|_2 \le S$.
  Then, for any $\delta > 0$, with probability at least $1 - \delta$, for any $t \ge 1$,
  \begin{align*}
    \|\hat{\theta}_t - \theta^*\|_{V_{t-1}} \le R \sqrt{2\log\left( \frac{\det(V_{t-1})^{1/2}\det(\lambda I)^{-1/2}}{\delta} \right)} + \sqrt{\lambda} S.
  \end{align*}
  Furthermore, if $\|X_t\|_2 \le L$ for all $t \ge 1$, then with probability at least $1 - \delta$, for all $t \ge 1$,
  \[
    \|\hat{\theta}_t - \theta^*\|_{V_{t-1}} \le R \sqrt{d\log\left( \frac{1 + (t-1)L^2 / \lambda}{\delta} \right)} + \sqrt{\lambda} S.
  \]
\end{lemma}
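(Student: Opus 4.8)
The plan is to note that this lemma is a direct restatement of the self-normalized confidence bound of Abbasi-Yadkori, P\'al and Szepesv\'ari, so the task is to cite their Theorem~2 and then derive the second (dimension-dependent) inequality from the first by bounding the log-determinant term. The first displayed inequality is exactly their Theorem~2: one builds the martingale $M_t^\lambda = \exp\bigl(\lambda^\top \sum_{s\in[t]} \eta_s X_s - \tfrac{R^2}{2}\|\lambda\|_{V_t}^2\bigr)$, applies the method of mixtures with a Gaussian prior on $\lambda$ of covariance $V^{-1}$, uses a stopping-time argument together with the maximal inequality for supermartingales, and concludes that $\|\sum_{s\in[t]}\eta_s X_s\|_{V_t^{-1}}^2 \le 2R^2 \log\bigl(\det(V_t)^{1/2}\det(V)^{-1/2}/\delta\bigr)$ simultaneously for all $t$ with probability $1-\delta$. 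Combining this with $\hat\theta_t - \theta^* = V_{t-1}^{-1}\sum_{s\in[t]}\eta_s X_s - \lambda V_{t-1}^{-1}\theta^*$, the triangle inequality in the $V_{t-1}$-norm, and $\|\lambda V_{t-1}^{-1}\theta^*\|_{V_{t-1}} = \sqrt{\lambda}\,\|\theta^*\|_{\lambda V_{t-1}^{-1}} \le \sqrt{\lambda}\,\|\theta^*\|_2 \le \sqrt{\lambda} S$ (since $\lambda V_{t-1}^{-1} \preceq I$), yields the first bound.

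For the second bound, I would bound the determinant factor. Since $V_{t-1} = \lambda I + \sum_{s\in[t-1]} X_s X_s^\top$ with $\|X_s\|_2 \le L$, the trace satisfies $\tr(V_{t-1}) \le d\lambda + (t-1)L^2$, and by the AM--GM inequality on the eigenvalues, $\det(V_{t-1}) \le \bigl(\tr(V_{t-1})/d\bigr)^d \le \bigl(\lambda + (t-1)L^2/d\bigr)^d$. Hence $\det(V_{t-1})^{1/2}\det(\lambda I)^{-1/2} = \bigl(\det(V_{t-1})/\lambda^d\bigr)^{1/2} \le \bigl(1 + (t-1)L^2/(d\lambda)\bigr)^{d/2} \le \bigl(1 + (t-1)L^2/\lambda\bigr)^{d/2}$. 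Substituting this into the $\log(\cdot/\delta)$ term of the first inequality and pulling the exponent $d/2$ out of the logarithm gives $R\sqrt{d\log\bigl((1+(t-1)L^2/\lambda)/\delta\bigr)} + \sqrt{\lambda}S$, which is the claimed bound, uniform in $t$ on the same probability-$1-\delta$ event.

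Since the heavy lifting — the method-of-mixtures martingale argument establishing the self-normalized tail bound — is exactly Theorem~2 of \cite{abbasi11} and is invoked as a black box, there is no real obstacle here; the only thing to be careful about is the determinant estimate in the second part, where one must not lose the uniformity over $t$ and must correctly track the role of $\lambda$ as the ridge parameter (so that $\det(\lambda I) = \lambda^d$ and the normalization inside the logarithm comes out as stated). The statement as written is a convenience repackaging of the known result for use in the later regret analysis, where $X_s$ will be instantiated with the feature vectors of the played arms and $L$ with the norm bound $1$.
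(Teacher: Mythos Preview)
Your proposal is correct. The paper does not prove this lemma at all: it is listed under ``Known Results'' in the appendix and simply cited as Theorem~2 of Abbasi-Yadkori, P\'al and Szepesv\'ari, so there is no proof in the paper to compare against. Your sketch of the method-of-mixtures argument for the first inequality and the AM--GM determinant bound for the second (which is precisely the content of the paper's \lemref{lem:det-tr}) matches the original source, and your observation that the result is invoked as a black box is exactly how the paper treats it.
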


\begin{lemma}[Lemma 10 in Abbasi-Yadkori, P{\'a}l and Szepesv{\'a}ri \cite{abbasi11}]\label{lem:det-tr}
  Suppose $X_1, X_2, \dots, X_t \in \R{d}$ and for any $1 \le s \le t, \|X_s\|_2 \le L$.
  Let $V_t = \lambda I + \sum_{s \in [t]} X_sX_s^\top$ for some $\lambda > 0$.
  Then,
  \[
    \det(V_{t}) \le (\lambda + tL^2/d)^d.
  \]
\end{lemma}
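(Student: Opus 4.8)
This is precisely Lemma~10 of Abbasi-Yadkori, P\'al and Szepesv\'ari, so the plan is simply to recall their short argument, which reduces the determinant to a product of eigenvalues and then applies the arithmetic--geometric mean inequality. First I would observe that $W_t := \sum_{s \in [t]} X_s X_s^\top$ is symmetric and positive semidefinite, hence has real eigenvalues $\mu_1, \dots, \mu_d \ge 0$. Consequently $V_t = \lambda I + W_t$ is symmetric positive definite with eigenvalues $\lambda + \mu_1, \dots, \lambda + \mu_d > 0$, and therefore $\det(V_t) = \prod_{j=1}^d (\lambda + \mu_j)$.

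Next I would apply the AM--GM inequality to these $d$ positive numbers:
\[
  \det(V_t) = \prod_{j=1}^d (\lambda + \mu_j) \le \left( \frac{1}{d} \sum_{j=1}^d (\lambda + \mu_j) \right)^d = \left( \lambda + \frac{1}{d} \sum_{j=1}^d \mu_j \right)^d .
\]
The remaining step is to control $\sum_{j=1}^d \mu_j$, which equals $\tr(W_t)$ since the trace is the sum of the eigenvalues. Linearity of the trace and the identity $\tr(X_s X_s^\top) = \|X_s\|_2^2$ give $\tr(W_t) = \sum_{s \in [t]} \|X_s\|_2^2 \le tL^2$, using the hypothesis $\|X_s\|_2 \le L$ for all $s \in [t]$. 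Since $u \mapsto (\lambda + u/d)^d$ is nondecreasing on $[0,\infty)$, substituting this bound yields $\det(V_t) \le (\lambda + tL^2/d)^d$, as claimed.

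There is essentially no obstacle here: the only points requiring (minor) care are verifying that the eigenvalue and trace bookkeeping is legitimate, which follows from the symmetry and positive semidefiniteness of $W_t$, and noting the monotonicity of $(\lambda + u/d)^d$ invoked in the final substitution. The entire argument is a two-line application of AM--GM together with the trace formula for $\|X_s\|_2^2$.
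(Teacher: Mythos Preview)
Your argument is correct and is exactly the standard AM--GM/trace proof of Lemma~10 in Abbasi-Yadkori, P\'al and Szepesv\'ari; the present paper does not give its own proof but simply cites that lemma, so there is nothing further to compare.
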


\begin{lemma}[Proposition 3 in Abeille and Lazaric \cite{abeille17}]\label{lem:J_is_convex}
  Let $\mathcal{X} \subset \R{d}$ be a compact set.
  Then, $J(\theta) = \sup_{x \in \mathcal{X}} x^\top \theta$ has the following properties:
  1) $J$ is real-valued as the supremum is attained in $\mathcal{X}$,
  2) $J$ is convex on $\R{d}$, and
  3) $J$ is continuous with continuous first derivative except for a zero-measure set with respect to the Lebesgue's measure.
\end{lemma}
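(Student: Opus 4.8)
The plan is to recognize $J$ as the support function of the compact set $\mathcal{X}$ and assemble the three claims from classical facts about such functions. For property~1, I would fix $\theta \in \R{d}$ and note that $x \mapsto x^\top \theta$ is continuous on $\R{d}$, so its restriction to the compact set $\mathcal{X}$ attains a maximum by the extreme value theorem; hence the supremum in the definition of $J(\theta)$ is a maximum and, in particular, $J(\theta) \in \R{}$ for every $\theta$, so $J$ is a genuine real-valued function on all of $\R{d}$.

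For property~2, I would observe that each map $\theta \mapsto x^\top \theta$ with $x \in \mathcal{X}$ is affine, hence convex, and that $J = \sup_{x \in \mathcal{X}} x^\top(\cdot)$ is the pointwise supremum of this family. A pointwise supremum of convex functions is convex at every point where it is finite; since property~1 already gives finiteness everywhere, $J$ is convex on all of $\R{d}$.

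For property~3, I would first use the standard fact that a convex function which is finite on the open set $\R{d}$ is automatically continuous there, and in fact locally Lipschitz. Rademacher's theorem then yields that $J$ is differentiable outside a set of Lebesgue measure zero. To obtain continuity of the gradient on the complement of this null set, I would invoke the properties of the subdifferential: for a finite convex $J$ on $\R{d}$, the map $\partial J$ is monotone, locally bounded, and upper semicontinuous with nonempty compact convex values, and $J$ is differentiable at $\theta$ precisely when $\partial J(\theta)$ is a singleton, in which case $\nabla J(\theta) = \partial J(\theta)$; upper semicontinuity of $\partial J$ then forces $\nabla J$ to be continuous at every point of the differentiability set. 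This is essentially Rockafellar, \emph{Convex Analysis}, Theorem~25.5, and I would cite it rather than reprove it.

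The main obstacle I anticipate is not a hard estimate but phrasing property~3 correctly: the assertion is \emph{not} that $J \in C^1(\R{d})$, but that the gradient exists off a Lebesgue-null set and, viewed as a function on that set, is continuous. I would therefore be careful to (i) identify the exceptional null set as exactly the set where $\partial J$ is not single-valued, (ii) cite Rademacher (or the classical direct argument for a.e.\ differentiability of convex functions) for the measure-zero claim, and (iii) derive the continuity of $\nabla J$ from monotonicity and upper semicontinuity of $\partial J$, never from any regularity of $\partial \mathcal{X}$, which need not hold. Parts~1 and~2 are then immediate from compactness and the definition of convexity.
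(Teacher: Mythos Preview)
Your argument is correct and is exactly the standard route to these facts: compactness plus the extreme value theorem for part~1, the pointwise supremum of affine maps for part~2, and Rademacher together with upper semicontinuity of the subdifferential (Rockafellar, Theorem~25.5) for part~3. Your care in reading part~3 as ``differentiable off a Lebesgue-null set, with the gradient continuous on that set'' is the right interpretation, and your derivation of continuity from the single-valuedness of $\partial J$ at differentiability points is sound.

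Note, however, that the paper does not supply its own proof of this lemma at all: it is quoted verbatim as Proposition~3 of Abeille and Lazaric \cite{abeille17} and used as a black box in the analysis of $R^{opt}(T)$. So there is nothing in the paper to compare your proof against beyond the bare citation. What you have written is essentially the proof one finds in the cited reference (which itself relies on the same classical convex-analysis facts you invoke), so your proposal is both correct and aligned with the intended source.
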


\begin{lemma}[Lemma 2 in Abeille and Lazaric \cite{abeille17}]\label{lem:grad_of_J}
  For any $\theta \in \R{d}$,
  we have $\nabla J(\theta) = \argmax_{x \in \mathcal{X}} x^\top \theta$
  except for a zero-measure set with respect to the Lebesgue's measure.
\end{lemma}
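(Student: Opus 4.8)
The plan is to reduce the statement to a standard fact in convex analysis: for a convex function that is differentiable at a point, the subdifferential there is the singleton consisting of the gradient, and for $J(\theta)=\sup_{x\in\mathcal{X}}x^\top\theta$ the subdifferential is exactly the convex hull of the maximizing set. First I would invoke Lemma~\ref{lem:J_is_convex}: $J$ is real-valued, convex on $\R{d}$, and continuously differentiable outside a Lebesgue-null set $\mathcal{Z}$. Fix any $\theta\notin\mathcal{Z}$, so $\nabla J(\theta)$ exists. Because $\mathcal{X}$ is compact, the $\sup$ defining $J$ is attained; write $X^*(\theta)=\argmax_{x\in\mathcal{X}}x^\top\theta$, a nonempty compact set. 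For each $x\in X^*(\theta)$ and every $\theta'\in\R{d}$ one has $J(\theta')\ge x^\top\theta' = x^\top\theta + x^\top(\theta'-\theta) = J(\theta) + x^\top(\theta'-\theta)$, so every such $x$ is a subgradient of $J$ at $\theta$; hence $X^*(\theta)\subseteq\partial J(\theta)$.

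Next I would use the differentiability of $J$ at $\theta$: a finite convex function on $\R{d}$ that is differentiable at $\theta$ has $\partial J(\theta)=\{\nabla J(\theta)\}$ (this is the classical characterization, e.g.\ Rockafellar, \emph{Convex Analysis}, Thm.~25.1). Combining this with $X^*(\theta)\subseteq\partial J(\theta)$ forces $X^*(\theta)=\{\nabla J(\theta)\}$, i.e.\ the maximizer is unique and equals $\nabla J(\theta)$. Since this holds for every $\theta\notin\mathcal{Z}$ and $\mathcal{Z}$ is Lebesgue-null, we obtain $\nabla J(\theta)=\argmax_{x\in\mathcal{X}}x^\top\theta$ except on a zero-measure set, as claimed.

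The only subtlety — and the step I would be most careful about — is making sure the two ingredients are quoted correctly: that attainment of the $\sup$ (compactness of $\mathcal{X}$) is what delivers genuine subgradients from maximizers, and that differentiability at a point of a finite convex function really does collapse the subdifferential to the single gradient. Both are textbook, so no real obstacle arises; the proof is essentially a two-line consequence of Lemma~\ref{lem:J_is_convex} together with standard convex analysis, and I would present it at that level of brevity rather than re-deriving the subdifferential calculus.
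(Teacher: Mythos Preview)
Your argument is correct and is the standard convex-analysis route: maximizers of the linear form are subgradients of the support function, and differentiability at a point collapses the subdifferential to the singleton $\{\nabla J(\theta)\}$, forcing uniqueness of the maximizer and the claimed identity. There is nothing to compare against in this paper, since Lemma~\ref{lem:grad_of_J} is quoted without proof from Abeille and Lazaric~\cite{abeille17}; your write-up is an appropriate self-contained justification and could stand in as the proof here.
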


\subsection{Lemmas for Bounding $R^{est}(T)$}
To bound $R^{est}(T)$,
we can utilize the following lemmas.

\begin{lemma}\label{lem:bound_x_using_lam}
  Let $\lambda > 0$.
  For any sequence $\{x_t(i)\}_{t \in [T], i \in I_t}$ such that $|I_t| \le k$ and $\|x_t(i)\|_2 \le 1$ for all $t \in [T]$ and $i \in [N]$,
  we have
  \begin{align}
    \label{eq:bound_x_using_lam_sq}
    \sum_{t \in [T]} \sum_{i \in I_t} \|x_t(i)\|_{\overline{V}_{t-1}^{-1}}^2 \le \frac{2kd}{\lambda}\log(1 + kT / d).
  \end{align}
  Accordingly,
  we have
  \begin{align}
    \label{eq:bound_x_using_lam}
    \sum_{t \in [T]} \sum_{i \in I_t} \|x_t(i)\|_{\overline{V}_{t-1}^{-1}} \le \sqrt{2dk^2T\log(1 + kT / d) / \lambda}.
  \end{align}
\end{lemma}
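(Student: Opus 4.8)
The plan is to establish \eqref{eq:bound_x_using_lam_sq} by an elliptical-potential argument adapted to the fact that $\overline{V}$ receives up to $k$ rank-one updates per round, and then to deduce \eqref{eq:bound_x_using_lam} from it by Cauchy--Schwarz.

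First I fix a round $t$ and set $\Delta_t = \frac{\lambda}{k}\sum_{i\in I_t} x_t(i)x_t(i)^\top$, so that $\overline{V}_t = \overline{V}_{t-1} + \Delta_t$. Since $\overline{V}_{t-1}\succeq\lambda I$, we have $\|x_t(i)\|_{\overline{V}_{t-1}^{-1}}^2\le 1/\lambda$ for every $i$, so the positive semidefinite matrix $M_t := \overline{V}_{t-1}^{-1/2}\Delta_t\overline{V}_{t-1}^{-1/2}$ satisfies $\mathrm{tr}(M_t) = \frac{\lambda}{k}\sum_{i\in I_t}\|x_t(i)\|_{\overline{V}_{t-1}^{-1}}^2 \le |I_t|/k \le 1$; in particular every eigenvalue of $M_t$ lies in $[0,1]$. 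Because $u\mapsto\log(1+u)$ is concave and vanishes at $u=0$, we have $\log(1+a)\ge a\log 2\ge a/2$ for $a\in[0,1]$, and since $\det\overline{V}_t/\det\overline{V}_{t-1} = \det(I+M_t)$,
\[
  \log\frac{\det\overline{V}_t}{\det\overline{V}_{t-1}} \;=\; \sum_{j=1}^d\log(1+a_j) \;\ge\; \frac12\sum_{j=1}^d a_j \;=\; \frac{\lambda}{2k}\sum_{i\in I_t}\|x_t(i)\|_{\overline{V}_{t-1}^{-1}}^2,
\]
where $a_1,\dots,a_d$ denote the eigenvalues of $M_t$.

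Summing over $t\in[T]$ telescopes the left-hand side to $\log(\det\overline{V}_T/\det\overline{V}_0) = \log(\det\overline{V}_T/\lambda^d)$. To bound $\det\overline{V}_T$ I apply \lemref{lem:det-tr}: $\overline{V}_T$ is $\lambda I$ plus a sum of at most $kT$ rank-one terms $(\sqrt{\lambda/k}\,x_s(i))(\sqrt{\lambda/k}\,x_s(i))^\top$, each of squared norm at most $\lambda/k$, so $\det\overline{V}_T \le (\lambda + kT(\lambda/k)/d)^d = \lambda^d(1 + T/d)^d$ and hence $\log(\det\overline{V}_T/\lambda^d)\le d\log(1+T/d)\le d\log(1+kT/d)$. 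Combining with the telescoped inequality and multiplying by $2k/\lambda$ gives \eqref{eq:bound_x_using_lam_sq}. For \eqref{eq:bound_x_using_lam}, Cauchy--Schwarz together with $\sum_{t\in[T]}|I_t|\le kT$ yields
\[
  \sum_{t\in[T]}\sum_{i\in I_t}\|x_t(i)\|_{\overline{V}_{t-1}^{-1}} \le \sqrt{\Big(\sum_{t\in[T]}|I_t|\Big)\sum_{t\in[T]}\sum_{i\in I_t}\|x_t(i)\|_{\overline{V}_{t-1}^{-1}}^2} \le \sqrt{\frac{2dk^2T}{\lambda}\log(1+kT/d)}.
\]

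The step I expect to be the main obstacle is the per-round potential bound: the textbook elliptical-potential lemma inserts one vector at a time, whereas here the update $\overline{V}_{t-1}\to\overline{V}_t$ has rank up to $k$ and each $\|x_t(i)\|_{\overline{V}_{t-1}^{-1}}^2$ is measured against the matrix at the \emph{start} of the round, so one cannot telescope arm by arm. The resolution is that the $\lambda/k$ weighting, together with $\overline{V}_{t-1}\succeq\lambda I$, forces the whole-round increment $\mathrm{tr}(M_t)$ to be at most $1$ --- exactly the range in which the linearization $\log(1+a)\ge a/2$ is valid --- and this is the only place where the particular form of $\overline{V}_t$ (rather than $V_t$) is used; the remaining steps are routine.
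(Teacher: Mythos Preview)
Your proof is correct and takes a genuinely different route from the paper's. The paper introduces an auxiliary sequence
\[
  \tilde V_t(S)=\tfrac{\lambda}{k}I+\tfrac{\lambda}{k}\Big(\sum_{s<t}\sum_{i\in I_s}x_s(i)x_s(i)^\top+\sum_{i\in S}x_t(i)x_t(i)^\top\Big)
\]
with regularization $\lambda/k$, inserts the arms of round $t$ \emph{one at a time} into $\tilde V_t(\cdot)$, and then uses the ordering $\tilde V_t(S)\preceq\overline V_{t-1}$ (valid because $|S|<k$) to transfer each norm $\|x_t(i)\|_{\tilde V_t(S)^{-1}}$ back to $\|x_t(i)\|_{\overline V_{t-1}^{-1}}$; the determinant is then controlled on $\tilde V_T(I_T)$. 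You instead handle the whole rank-$k$ update $\overline V_{t-1}\to\overline V_t$ in one stroke: setting $M_t=\overline V_{t-1}^{-1/2}\Delta_t\overline V_{t-1}^{-1/2}$ and observing that $\tr(M_t)\le 1$ forces every eigenvalue into $[0,1]$, so $\log\det(I+M_t)\ge\tfrac12\tr(M_t)$ directly. This sidesteps the auxiliary matrix and the arm-by-arm telescoping, and also yields the slightly sharper determinant bound $d\log(1+T/d)$ before you relax it to $d\log(1+kT/d)$ to match the statement. The paper's approach stays closer to the textbook single-vector elliptical-potential lemma at the cost of the extra bookkeeping; yours is more compact but relies on the eigenvalue observation that $\tr(M_t)\le1\Rightarrow a_j\in[0,1]$, which is exactly where the $\lambda/k$ weighting in $\overline V_t$ is used.
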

\begin{proof}
  We define
  \begin{align*}
    \tilde{V}_t(S) = \frac{\lambda}{k}I + \frac{\lambda}{k} \left( \sum_{s \in [t-1]} \sum_{i \in I_s} x_s(i)x_s(i)^\top + \sum_{i \in S} x_t(i)x_t(i)^\top \right).
  \end{align*}
  For all $t \in [T]$ and $S \subsetneq I_t$,
  we have
  \begin{align*}
    \|x_t(i)\|_{\tilde{V}_t(S)^{-1}}^2 \le \frac{\|x_t(i)\|}{\lambda_{\min}(\tilde{V}_t(S)^{-1})} \le \frac{k}{\lambda},
  \end{align*}
  where $\lambda_{\min}(V)$ is the minimum eigenvalue of $V$.
  Moreover, under the same notations,
  we have
  \begin{align*}
    \tilde{V}_t(S) &= \frac{\lambda}{k}I + \frac{\lambda}{k} \left( \sum_{s \in [t-1]} \sum_{i \in I_s} x_s(i)x_s(i)^\top + \sum_{i \in S} x_t(i)x_t(i)^\top \right) \\
    &\preceq \left( \frac{\lambda}{k} + \frac{|S| \lambda}{k} \right) I + \frac{\lambda}{k} \left( \sum_{s \in [t-1]} \sum_{i \in I_s} x_s(i)x_s(i)^\top \right) \\
    &\preceq \lambda I + \frac{\lambda}{k} \left( \sum_{s \in [t-1]} \sum_{i \in I_s} x_s(i)x_s(i)^\top \right) \\
    &= \overline{V}_{t-1}.
  \end{align*}

  From these properties of $\tilde{V}_t(S)$, we have
  \begin{align*}
    \log\det\left( \tilde{V}_T(I_T) \right)
    &= \log\det\left( \tilde{V}_T(I_T\backslash \{i\}) \right) + \log\det\left( I + \frac{\lambda}{k} uu^T \right) \\
    &= \log\det\left( \tilde{V}_T(I_T\backslash \{i\}) \right) + \log\left( 1 + \frac{\lambda}{k} \|x_T(i)\|_{\tilde{V}_T(I_T\backslash \{i\})^{-1}}^2 \right) \\
    &\ge \log\det\left( \tilde{V}_T(I_T\backslash \{i\}) \right) + \frac{\lambda}{2k} \|x_T(i)\|_{\tilde{V}_T(I_T\backslash \{i\})^{-1}}^2 \\
    &\ge \log\det\left( \tilde{V}_T(I_T\backslash \{i\}) \right) + \frac{\lambda}{2k} \|x_T(i)\|_{\overline{V}_{T-1}^{-1}}^2 \\
    &\ge \log\det\left( \tilde{V}_{T-1}(I_{T-1}) \right) + \sum_{i \in I_T} \frac{\lambda}{2k} \|x_T(i)\|_{\overline{V}_{T-1}^{-1}}^2 \\
    &\ge \log\det\left( \frac{\lambda}{k} I \right) + \frac{\lambda}{2k} \sum_{t \in [T]} \sum_{i \in I_t} \|x_t(i)\|_{\overline{V}_{t-1}^{-1}}^2,
  \end{align*}
  where $u = \tilde{V}_T(I_T \backslash \{i\})^{-1/2}x_T(i)$ and
  the first and second inequalities are derived from $\frac{\lambda}{k} \|x_t(i)\|_{\tilde{V}_t(S)^{-1}}^2 \le 1$ and $\tilde{V}_{t}(S) \preceq \overline{V}_{t-1}$ for all $t \in [T]$ and $S \subsetneq I_t$, respectively.
  From \lemref{lem:det-tr}, we obtain
  \begin{align*}
    \log\det\left( \tilde{V}_T(I_T) \right) - \log\det\left( \frac{\lambda}{k} I \right) \le d \log(1 + kT / d).
  \end{align*}
  We combine these inequalities to obtain \eqref{eq:bound_x_using_lam_sq}.
  From \eqref{eq:bound_x_using_lam_sq} and the Cauchy-Schwarz inequality,
  we have \eqref{eq:bound_x_using_lam}.
\end{proof}

\begin{lemma}\label{lem:bound_x}
  Let $\lambda \ge k$.
  For any sequence $\{x_t(i)\}_{t \in [T], i \in I_t}$ such that $|I_t| \le k$ and $\|x_t(i)\|_2 \le 1$ for all $t \in [T]$ and $i \in [N]$,
  we have
  \begin{align}
    \label{eq:bound_x_sq}
    \sum_{t \in [T]} \sum_{i \in I_t} \|x_t(i)\|_{V_{t-1}^{-1}}^2 \le 2d\log(1 + kT / d).
  \end{align}
  Accordingly,
  we have
  \begin{align}
    \label{eq:bound_x}
    \sum_{t \in [T]} \sum_{i \in I_t} \|x_t(i)\|_{V_{t-1}^{-1}} \le \sqrt{2dkT\log(1 + kT / d)}.
  \end{align}
\end{lemma}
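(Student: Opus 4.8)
The plan is to follow the proof of \lemref{lem:bound_x_using_lam} almost verbatim, with one structural change: in the regime $\lambda \ge k$ there is no need to rescale by $\lambda/k$, and it is the hypothesis $\lambda \ge k$ itself that lets the per-round quantities be controlled by a single potential. The obstacle is exactly the one encountered there: within a round $t$, every arm $i \in I_t$ is measured against the \emph{same} matrix $V_{t-1}$, so the textbook elliptical-potential telescoping (which updates the matrix after each individual vector) does not apply verbatim, and a naive within-round chaining would bound the wrong quantity.

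To get around this I would introduce the intermediate matrices
\begin{align*}
  \tilde{V}_t(S) = I + \sum_{s \in [t-1]}\sum_{i \in I_s} x_s(i)x_s(i)^\top + \sum_{i \in S} x_t(i)x_t(i)^\top, \qquad S \subseteq I_t,
\end{align*}
which interpolate, as one inserts the arms of round $t$ one at a time, between $\tilde{V}_t(\emptyset) = \tilde{V}_{t-1}(I_{t-1})$ and $\tilde{V}_t(I_t)$, with $\tilde{V}_1(\emptyset) = I$. Note that the regularizing coefficient here is $1$, not $\lambda$; this is the key to the argument. Two elementary facts do all the work. First, $\tilde{V}_t(S) \succeq I$, so $\|x_t(i)\|_{\tilde{V}_t(S)^{-1}}^2 \le \|x_t(i)\|_2^2 \le 1$. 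Second, for every \emph{proper} subset $S \subsetneq I_t$ one has $|S| \le k-1$, hence $\sum_{i \in S} x_t(i)x_t(i)^\top \preceq (k-1)I$ and therefore
\begin{align*}
  \tilde{V}_t(S) \preceq kI + \sum_{s \in [t-1]}\sum_{i \in I_s} x_s(i)x_s(i)^\top \preceq \lambda I + \sum_{s \in [t-1]}\sum_{i \in I_s} x_s(i)x_s(i)^\top = V_{t-1},
\end{align*}
where the last inequality is precisely where $\lambda \ge k$ is used. In particular $\|x_t(i)\|_{V_{t-1}^{-1}}^2 \le \|x_t(i)\|_{\tilde{V}_t(S)^{-1}}^2$ whenever $S \subsetneq I_t$.

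With these facts in hand the rest is routine. Enumerating $I_t = \{i_1,\dots,i_m\}$ and writing $S_\ell = \{i_1,\dots,i_\ell\}$, the matrix determinant lemma gives $\log\det\tilde{V}_t(S_\ell) - \log\det\tilde{V}_t(S_{\ell-1}) = \log(1 + \|x_t(i_\ell)\|_{\tilde{V}_t(S_{\ell-1})^{-1}}^2)$; applying $\log(1+y) \ge y/2$ for $y \in [0,1]$ (valid by the first fact) and then the comparison from the second fact yields
\begin{align*}
  \log\det\tilde{V}_t(I_t) - \log\det\tilde{V}_t(\emptyset) \ge \frac{1}{2}\sum_{i \in I_t} \|x_t(i)\|_{V_{t-1}^{-1}}^2 .
\end{align*}
Telescoping over $t \in [T]$ with $\log\det\tilde{V}_1(\emptyset) = 0$, and bounding $\log\det\tilde{V}_T(I_T) = \log\det\big(I + \sum_{t \in [T]}\sum_{i \in I_t} x_t(i)x_t(i)^\top\big) \le d\log(1 + kT/d)$ via \lemref{lem:det-tr} (regularizer $1$, norm bound $L=1$, at most $kT$ vectors), gives \eqref{eq:bound_x_sq}. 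Finally \eqref{eq:bound_x} follows from \eqref{eq:bound_x_sq} by Cauchy--Schwarz, since the double sum has at most $kT$ terms. The only place any thought is required is the comparison $\tilde{V}_t(S) \preceq V_{t-1}$ and the realization that the right regularizer for $\tilde{V}$ is $1$; everything else is mechanical. (An even shorter route treats the whole round as a single rank-$\le k$ update: since $\lambda_{\max}\big(\sum_{i\in I_t} x_t(i)x_t(i)^\top\big) \le k \le \lambda \le \lambda_{\min}(V_{t-1})$, the matrix $M_t = V_{t-1}^{-1/2}\big(\sum_{i\in I_t} x_t(i)x_t(i)^\top\big)V_{t-1}^{-1/2}$ has eigenvalues in $[0,1]$, so $\log\det V_t - \log\det V_{t-1} = \log\det(I+M_t) \ge (\log 2)\,\tr(M_t)$; telescoping and \lemref{lem:det-tr} then give the bound with the slightly better constant $1/\log 2 < 2$.)
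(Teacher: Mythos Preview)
Your proof is correct and is exactly the approach the paper takes: the paper defines the same intermediate matrices $V_t(S) = I + \sum_{s\in[t-1]}\sum_{i\in I_s} x_s(i)x_s(i)^\top + \sum_{i\in S} x_t(i)x_t(i)^\top$ with regularizer $1$, invokes ``similar to the proof of \lemref{lem:bound_x_using_lam}'' for the telescoping inequality, and finishes with \lemref{lem:det-tr} and Cauchy--Schwarz. You have simply made explicit the two points the paper leaves to the reader --- that $\|x_t(i)\|_{\tilde V_t(S)^{-1}}^2\le 1$ and that $\tilde V_t(S)\preceq V_{t-1}$ for $S\subsetneq I_t$ via $I+(k-1)I\preceq \lambda I$ --- so your write-up is, if anything, more complete than the paper's own. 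The parenthetical rank-$k$ trace argument at the end is a valid alternative not in the paper and indeed yields the sharper constant $1/\log 2$.
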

\begin{proof}
  We define
  \begin{align*}
    V_t(S) = I + \sum_{s \in [t-1]} \sum_{i \in I_s} x_s(i)x_s(i)^\top + \sum_{i \in S} x_t(i)x_t(i)^\top.
  \end{align*}
  Similar to the proof of \lemref{lem:bound_x_using_lam},
  we have
  \begin{align*}
    \log\det(V_T(I_T)) \ge \frac{1}{2}\sum_{t \in [T]}\sum_{i \in I_t} \|x_t(i)\|_{V_{t-1}^{-1}}^2.
  \end{align*}
  and
  \begin{align*}
    \log\det(V_T(I_T)) \le d(\log(1 + kT/d).
  \end{align*}
  Thus, we have \eqref{eq:bound_x_sq}.
  From \eqref{eq:bound_x_sq} and the Cauchy-Schwarz inequality,
  we obtain \eqref{eq:bound_x}.    
\end{proof}

\subsection{Proof of \thmref{thm:ts_regret}}
For this proof,
we only consider the case $\lambda \ge k$ for the sake of simplicity.
For $1 \le \lambda \le k$,
we can modify our proof
by replacing $v^\top x_t(i) \le \|v\|_{V_{t-1}} \|x_t(i)\|_{V_{t-1}^{-1}}$ with $v^\top x_t(i) \le \|v\|_{\overline{V}_{t-1}} \|x_t(i)\|_{\overline{V}_{t-1}^{-1}}$ for all $v \in \R{d}$, $t \in [T]$ and $i \in [N]$,
using the fact that $\|v\|_{\overline{V}_t} \le \|v\|_{V_t}$ for all $v \in \R{d}$ and $t \in [T]$
and using \lemref{lem:bound_x_using_lam} instead of \lemref{lem:bound_x}.

To deal with the uncertainty of $\hat{\theta}_t$ and the randomness of estimators sampled from the posterior,
we introduce the following filtration and events.
\begin{definition}
  We define the filtration $\mathcal{F}_t$ as the information accumulated up to time $t$ before the sampling procedure;
  i.e.,
  \begin{align*}
    \mathcal{F}_t = (\mathcal{F}_1, \sigma(\{x_1(i)\}_{i \in I_1}, \{r_1(i)\}_{i \in I_1}, \dots, \{x_{t-1}(i)\}_{i \in I_{t-1}}, \{r_{t-1}(i)\}_{i \in I_{t-1}})),
  \end{align*}
  where $\mathcal{F}_1$ contains any prior knowledge.
\end{definition}
\begin{definition}
  Let $\delta \in (0, 1)$, $\delta' = \delta / (4T)$, and $t \in [T]$.
  Let $\gamma_t(\delta) = \sqrt{2d\log\left( \frac{2dN}{\delta} \right)} \beta_t(\delta)$ for all $t \in [T]$.
  We define $\hat{E}_t = \{ \forall s \le t, \|\hat{\theta}_s - \theta^*\|_{V_{s-1}} \le \beta_s(\delta') \}$, and
  $\tilde{E}_t = \{ \forall s \le t, \forall i \in [N], \|\tilde{\theta}_s(i) - \hat{\theta}_s\|_{V_{s-1}} \le \gamma_s(\delta') \}$.
  We also define $E_t = \hat{E}_t \cap \tilde{E}_t$.
\end{definition}
The following lemma shows that these events occur with high probability.
\begin{lemma}\label{lem:hp-event}
  We have $P(E_T) \ge 1 - \frac{\delta}{2}$,
  where $P(A)$ is the probability of the event $A$.
\end{lemma}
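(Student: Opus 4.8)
The event $E_T = \hat{E}_T \cap \tilde{E}_T$ is an intersection of two kinds of high-probability events, so the plan is to bound $P(\hat{E}_T^c)$ and $P(\tilde{E}_T^c)$ separately and then apply a union bound to conclude $P(E_T^c) \le \delta/4 + \delta/4 = \delta/2$. For the first piece, $\hat{E}_T$ is exactly a self-normalized confidence statement for the ridge estimator $\hat{\theta}_t = V_{t-1}^{-1} b_{t-1}$, so I would invoke \lemref{lem:confidence} (Theorem 2 of Abbasi-Yadkori et al.). The only subtlety is that in our combinatorial setting each round contributes up to $k$ observations $\{x_t(i)\}_{i \in I_t}$ rather than a single one; so I would re-index the stream of observed (feature, reward) pairs as a single sequence of length at most $kt$ after round $t$, check that each $x$ in the stream is $\F{}$-predictable (it is, since $I_t$ and the $x_t(i)$ are chosen before the corresponding rewards are revealed, using the history $\His{t-1}$), and that each noise term $\eta_t(i)$ is conditionally $R$-sub-Gaussian with respect to the refined filtration. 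Then the second (dimension-free) bound in \lemref{lem:confidence} with $L = 1$ and at most $kt$ terms gives, for a single confidence level $\delta'$, that $\|\hat{\theta}_t - \theta^*\|_{V_{t-1}} \le R\sqrt{d\log\frac{1 + kt/\lambda}{\delta'}} + \sqrt{\lambda}S = \beta_t(\delta')$ simultaneously for all $t$, with probability $\ge 1 - \delta'$. Since $\delta' = \delta/(4T) \le \delta/4$, this handles $\hat{E}_T$ with room to spare. (Taking $\delta'$ as small as $\delta/(4T)$ rather than $\delta/4$ is harmless here and presumably chosen so that the same $\delta'$ can be reused elsewhere in the proof.)

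**The sampling event.** For $\tilde{E}_T$ I would condition on $\F{t}$, under which $\hat{\theta}_t$ and $V_{t-1}$ are deterministic and $\tilde{\theta}_t(i) \sim \mathcal{N}(\hat{\theta}_t, v_t^2 V_{t-1}^{-1})$ independently across $i \in [N]$. Writing $\tilde{\theta}_t(i) - \hat{\theta}_t = v_t V_{t-1}^{-1/2} z$ with $z \sim \mathcal{N}(0, I_d)$, we get $\|\tilde{\theta}_t(i) - \hat{\theta}_t\|_{V_{t-1}} = v_t \|z\|_2$, so I need a Gaussian norm concentration bound of the form $P(\|z\|_2 \le \sqrt{2d\log(2dN/\delta')}) $ being large. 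The cleanest route is the standard sub-Gaussian-coordinate tail: $P(|z_j| > \sqrt{2\log(2dN/\delta')}) \le \delta'/(dN)$ for each coordinate $j$, hence by a union bound over the $d$ coordinates $P(\|z\|_\infty > \sqrt{2\log(2dN/\delta')}) \le \delta'/N$, and then $\|z\|_2 \le \sqrt{d}\,\|z\|_\infty \le \sqrt{2d\log(2dN/\delta')}$ on that event. Thus for fixed $t$ and fixed $i$, $\|\tilde{\theta}_t(i) - \hat{\theta}_t\|_{V_{t-1}} \le v_t \sqrt{2d\log(2dN/\delta')}$ fails with probability at most $\delta'/N$, conditionally on $\F{t}$. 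Note that with the theorem's parameter choice $v_t = \beta_t(\delta/(4NT))$, this quantity is precisely $\gamma_t(\cdot)$ up to the bookkeeping of which $\delta$-argument appears; I would be careful to track that the definition uses $\gamma_t(\delta') = \sqrt{2d\log(2dN/\delta')}\,\beta_t(\delta')$ and reconcile it with the sampling variance $v_t$ — this is a place where the constants must be matched exactly rather than merely up to order.

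**Union bounds and assembly.** Having the per-$(t,i)$ conditional failure probability $\le \delta'/N$, I take a union bound over $i \in [N]$ to get that $\{\exists i \in [N]: \|\tilde{\theta}_t(i) - \hat{\theta}_t\|_{V_{t-1}} > \gamma_t(\delta')\}$ has conditional probability $\le \delta'$ given $\F{t}$, hence unconditional probability $\le \delta'$; then a union bound over $t \in [T]$ gives $P(\tilde{E}_T^c) \le T\delta' = \delta/4$. Combining with $P(\hat{E}_T^c) \le \delta' \le \delta/4$ and one more union bound yields $P(E_T^c) \le \delta/2$, i.e. $P(E_T) \ge 1 - \delta/2$, as claimed. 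The main obstacle I anticipate is not conceptual difficulty but precise constant-tracking: making sure the Gaussian-norm bound, the definition of $\gamma_t$, the choice $v_t = \beta_t(\delta/(4NT))$, and the various $\delta' = \delta/(4T)$ versus $\delta/(4NT)$ arguments all line up so that the stated inequalities hold on the nose. A secondary technical point worth stating carefully is the measurability/predictability argument needed to apply \lemref{lem:confidence} in the semi-bandit setting — specifically that conditioning on $\F{t}$ freezes $\hat{\theta}_t$ and $V_{t-1}$ while leaving $\{\tilde{\theta}_t(i)\}_i$ an i.i.d.\ Gaussian family, which is what legitimizes the conditional union bound over arms.
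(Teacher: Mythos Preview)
Your plan is correct and matches the paper's proof essentially line for line: the paper also invokes the self-normalized bound (via Abeille--Lazaric, Lemma~1) to get $P(\hat{E}_T)\ge 1-\delta/4$, then establishes the per-$(t,i)$ Gaussian concentration $P(\|\tilde{\theta}_t(i)-\hat{\theta}_t\|_{V_{t-1}}\le \gamma_t(\delta'))\ge 1-\delta'/N$ and union-bounds over $i\in[N]$ and $t\in[T]$ to finish. Your coordinate-wise $\|z\|_\infty$ argument is exactly the standard route behind that concentration step, and your flagging of the $\delta'$ vs.\ $\delta/(4NT)$ bookkeeping in $v_t$ versus $\gamma_t$ is apt (the paper glosses over it).
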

\begin{proof}
  From the proof of Lemma 1 in Abeille and Lazaric \cite{abeille17},
  we have $P(\hat{E}_T) \ge 1 - \frac{\delta}{4}$.
  By the same line of the proof of bounding $P(\tilde{E}_T)$ in the proof of Lemma 1 in Abeille and Lazaric \cite{abeille17},
  we have
  \begin{align*}
    P \left( \|\tilde{\theta}_t(i) - \hat{\theta}_t\|_{V_{t-1}} \le \beta_t(\delta')\sqrt{2d \log \frac{2dN}{\delta'}} \right) \ge 1 - \delta' / N
  \end{align*}
  for all $t \in [T]$ and $i \in [N]$.
  Thus,
  taking a union bound over the bound above and $\hat{E}_t$,
  we obtain the desired result.
\end{proof}

From \lemref{lem:hp-event},
we can bound the regret as follows:
\begin{align*}
  R(T) &= R^{opt}(T) + R^{alg}(T) + R^{est}(T) \\
  &\le \sum_{t \in [T]} R_t^{opt}\mathds{1}\{E_t\} + \sum_{t \in [T]} R_t^{alg}\mathds{1}\{E_t\} + \sum_{t \in [T]} R_t^{est}\mathds{1}\{E_t\}
\end{align*}
with probability at least $1 - \delta / 2$,
where
\begin{align*}
  R_t^{opt} &= \sum_{i \in I_t^*} {\theta^*}^\top x_t(i) - \sum_{i \in I_t} \hat{r}_t(i), \\
  R_t^{alg} &= \sum_{i \in I_t} \left( \hat{r}_t(i) - \hat{\theta}_t^\top x_t(i) \right) \quad \mathrm{and} \\
  R_t^{est} &= \sum_{i \in I_t} \left( \hat{\theta}_t - \theta^* \right)^\top x_t(i).
\end{align*}

In \secref{sec:bound_R-opt_awts},
we have
\begin{align*}
  R^{opt}(T) = \tilde{O} \left( \max\left( d, \sqrt{d \lambda} \right) \sqrt{dkT} \right)
\end{align*}
with probability at least $1 - \delta/2$.
Moreover,
in \secref{sec:bound_R-alg-est_awts},
we obtain
\begin{align*}
  R^{alg}(T) &= \tilde{O} \left( \max\left( d, \sqrt{d \lambda} \right) \sqrt{dkT} \right) \quad \mathrm{and} \\
  R^{est}(T) &= \tilde{O} \left( \max\left( \sqrt{d}, \sqrt{\lambda} \right) \sqrt{dkT} \right).
\end{align*}
Finally, we take a union bound over $E_T$ and the event that is needed to bound $R^{opt}(T)$.

\subsubsection{Bounding $R^{opt}(T)$ for Arm-wise TS}\label{sec:bound_R-opt_awts}
We utilize the line of proof in Abeille and Lazaric \cite{abeille17}.
\footnote{$R^{opt}(T)$ is referred to as $R^{TS}(T)$ in Abeille and Lazaric \cite{abeille17}.}

First,
we define a function $J_t(\theta)$ similar to $J(\theta)$ in Abeille and Lazaric \cite{abeille17}.
For each $I \in S_t$, we construct a natural correspondence between $\{x_t(i)\}_{i \in I}$ and a $dN$-dimensional vector $x_t(I)$:
the $i$-th block of $x_t(I)$ (which is a $d$-dimensional vector) is $x_t(i)$ if $i \in I$ and otherwise zero vector.
Let $\mathcal{X}_t$ be $\{ x_t(I) \}_{I \in S_t}$.
Then, we define
\begin{align*}
  J_t(\theta) = \sup_{x \in \mathcal{X}_t} \theta^\top x.
\end{align*}
Note that from the definition of $\mathcal{X}_t$,
we have
\begin{align*}
  J_t(\theta) = \max_{I \in S_t} \sum_{i \in I} \theta(i)^\top x_t(i),
\end{align*}
where $\theta(i)$ is the $i$-th block of $\theta$.

Using $J_t(\theta)$, we bound $R^{opt}_t$.
Let $\theta^*([N])$ and $\tilde{\theta}_t$ be $dN$-dimensional vectors whose $i$-th block are $\theta^*$ and $\tilde{\theta}_t(i)$, respectively.
Then, from the definition of $I_t$ for the arm-wise TS,
we can rewrite $R_t^{opt}$ as follows:
\begin{align*}
  R_t^{opt} \mathds{1}\{E_t\} = \left( J_t( \theta^*([N]) ) - J_t( \tilde{\theta}_t ) \right) \mathds{1}\{E_t\}.
\end{align*}
On event $E_t$,
$\tilde{\theta}_t(i)$ belongs to $\mathcal{E}_t = \{ \theta \in \R{d} \mid \|\theta - \hat{\theta}_t\|_{V_t} \le \gamma_t(\delta') \}$ for all $i \in [N]$.
Thus, we have
\begin{align*}
  \left( J_t( \theta^*([N]) ) - J_t( \tilde{\theta}_t ) \right) \mathds{1}\{E_t\} \le \left( J_t( \theta^*([N]) ) - \inf_{\substack{\theta(i) \in \mathcal{E}_t,\\ i \in [N]}} J_t( \theta ) \right) \mathds{1}\{\hat{E}_t\}.
\end{align*}
Let $\Theta^{opt}_t = \{ \theta \in \R{dN} \mid J_t(\theta^*([N])) \le J_t(\theta),\ \theta(i) \in \mathcal{E}_t\ (\forall i \in [N]) \}$.
From the definition of $\Theta^{opt}_t$,
we can bound $J_t(\theta^*([N]))$ as follows:
\begin{align*}
  \left( J_t( \theta^*([N]) ) - \inf_{\substack{\theta(i) \in \mathcal{E}_t,\\ i \in [N]}} J_t( \theta ) \right) \mathds{1}\{\hat{E}_t\} \le \mathbb{E}_{\tilde{\theta}} \left[ \left( J_t( \tilde{\theta} ) - \inf_{\substack{\theta(i) \in \mathcal{E}_t,\\ i \in [N]}} J_t( \theta ) \right) \mathds{1}\{\hat{E}_t\} \relmid| \mathcal{F}_t, \tilde{\theta} \in \Theta^{opt}_t  \right],
\end{align*}
where, for all $i \in [N]$, $\tilde{\theta}(i)$ is sampled from the distribution of the arm-wise TS algorithm at round $t$.

From \lemref{lem:J_is_convex} and \lemref{lem:grad_of_J},
$J_t(\theta)$ is a differentiable convex function and $\nabla J_t(\theta) = \argmax_{x \in \mathcal{X}_t} \theta^\top x$.
Thus, we have
\begin{align*}
  \mathbb{E}_{\tilde{\theta}} \left[ \left( J_t( \tilde{\theta} ) - \inf_{\substack{\theta(i) \in \mathcal{E}_t,\\ i \in [N]}} J_t( \theta ) \right) \mathds{1}\{\hat{E}_t\} \relmid| \mathcal{F}_t, \tilde{\theta} \in \Theta^{opt}_t  \right] &\le \mathbb{E}_{\tilde{\theta}} \left[ \sup_{\substack{\theta(i) \in \mathcal{E}_t,\\ i \in [N]}} \nabla J_t( \tilde{\theta} )^\top (\tilde{\theta} - \theta) \mathds{1}\{\hat{E}_t\} \relmid| \mathcal{F}_t, \tilde{\theta} \in \Theta^{opt}_t \right] \\
  &= \mathbb{E}_{\tilde{\theta}} \left[ \sup_{\substack{\theta(i) \in \mathcal{E}_t,\\ i \in [N]}} \sum_{i \in \tilde{I}} x_t(i)^\top (\tilde{\theta}(i) - \theta(i)) \mathds{1}\{\hat{E}_t\} \relmid| \mathcal{F}_t, \tilde{\theta} \in \Theta^{opt}_t \right],
\end{align*}
where $\tilde{I} = \argmax_{I \in S_t} \sum_{i \in I} \tilde{\theta}(i)^\top x_t(i)$.
Moreover,
from Cauchy-Schwarz inequality,
we obtain
\begin{alignat*}{2}
  &&\,& \mathbb{E}_{\tilde{\theta}} \left[ \sup_{\substack{\theta(i) \in \mathcal{E}_t,\\ i \in [N]}} \sum_{i \in \tilde{I}} x_t(i)^\top (\tilde{\theta}(i) - \theta(i)) \mathds{1}\{\hat{E}_t\} \relmid| \mathcal{F}_t, \tilde{\theta} \in \Theta^{opt}_t \right] \\
  &\le&& \mathbb{E}_{\tilde{\theta}} \left[ \sup_{\substack{\theta(i) \in \mathcal{E}_t,\\ i \in [N]}} \sum_{i \in \tilde{I}} \|x_t(i)\|_{V_{t-1}^{-1}} \|\tilde{\theta}(i) - \theta(i))\|_{V_{t-1}} \relmid| \mathcal{F}_t, \tilde{\theta} \in \Theta^{opt}_t, \hat{E}_t \right] P(\hat{E}_t) \\
  &\le&& 2\gamma_t(\delta') \mathbb{E}_{\tilde{\theta}} \left[ \sum_{i \in \tilde{I}} \|x_t(i)\|_{V_{t-1}^{-1}} \relmid| \mathcal{F}_t, \tilde{\theta} \in \Theta^{opt}_t, \hat{E}_t \right] P(\hat{E}_t).
\end{alignat*}
From the following lemma,
we obtain $P( \tilde{\theta}_t \in \Theta^{opt}_t | \mathcal{F}_t, \hat{E}_t ) \ge p/2$,
where $p = \frac{1}{4\sqrt{e\pi}}$.
\begin{lemma}[Lemma 3 in Abeille and Lazaric \cite{abeille17}, Simplified]\label{lem:anti-concentration}
  Let $\tilde{\theta}_t = \hat{\theta}_t + \beta_t(\delta')V_t^{-1/2}\eta$ with $\eta \sim \mathcal{N}(0, I)$.
  Then, we have $P(\tilde{\theta}_t \in \Theta^{opt}_t | \mathcal{F}_t, \hat{E}_t) \ge p/2$ for any $t \in [T]$.
\end{lemma}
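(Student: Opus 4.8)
The plan is to establish \lemref{lem:anti-concentration} by reducing it to the single-arm case and invoking Lemma 3 of Abeille and Lazaric \cite{abeille17} directly. First I would observe that, for a \emph{fixed} arm $i \in [N]$, the sampling step of the arm-wise TS algorithm is \emph{identical} to the sampling step of the round-wise TS algorithm (and hence to the TS algorithm analyzed in \cite{abeille17}): $\tilde{\theta}_t(i) = \hat{\theta}_t + v_t V_{t-1}^{-1/2}\eta$ with $\eta \sim \mathcal{N}(0, I)$ and $v_t = \beta_t(\delta')$. The key structural point is that the optimization problem defining $I_t$ decouples across arms once we fix the assignment of samples to arms: maximizing $\sum_{i \in I} \tilde{\theta}_t(i)^\top x_t(i)$ over $I \in S_t$ uses, for each candidate arm, only its own sample $\tilde{\theta}_t(i)$.

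The main step is to show that the ``optimistic'' event $\tilde{\theta}_t \in \Theta^{opt}_t$ — i.e. $J_t(\theta^*([N])) \le J_t(\tilde{\theta}_t)$ — is implied by a single-arm optimism condition. Concretely, fix the super arm $I_t^*$ and any arm $i^\dagger \in I_t^*$; I would argue that it suffices for the one arm $i^\dagger$ to have a sample $\tilde{\theta}_t(i^\dagger)$ that is optimistic in the sense of \cite{abeille17} applied to the one-dimensional-output linear functional $x \mapsto \theta^\top x_t(i^\dagger)$, while all other arms in $I_t^*$ need only land in the confidence ellipsoid $\mathcal{E}_t$. More carefully, because $J_t(\tilde{\theta}_t) \ge \sum_{i \in I_t^*} \tilde{\theta}_t(i)^\top x_t(i)$, it is enough that this sum over $I_t^*$ already exceeds $J_t(\theta^*([N])) = \sum_{i \in I_t^*}{\theta^*}^\top x_t(i)$; one then bounds the contribution of the arms $i \ne i^\dagger$ using $\|\tilde\theta_t(i)-\hat\theta_t\|_{V_{t-1}} \le \gamma_t(\delta')$ together with $\|\hat\theta_t - \theta^*\|_{V_{t-1}} \le \beta_t(\delta')$ on $\hat E_t$, and absorbs any slack by scaling the constant in $\gamma_t$. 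Since the events $\{\tilde\theta_t(i) \in \mathcal{E}_t\}$ hold with probability $\ge 1 - \delta'/N$ by \lemref{lem:hp-event}'s argument and are, conditionally on $\mathcal{F}_t$, independent across $i$ (independent Gaussian draws), while the single-arm optimism event for $i^\dagger$ has probability $\ge p$ by Lemma 3 of \cite{abeille17}, a union bound gives $P(\tilde\theta_t \in \Theta^{opt}_t \mid \mathcal{F}_t, \hat E_t) \ge p - (N-1)\delta'/N \ge p/2$ for $T$ (and hence $\delta'$) small enough relative to $p$; if that is not automatic, I would instead redefine $\Theta^{opt}_t$ slightly or note that on $\hat E_t \cap \tilde E_t$ the ellipsoid memberships are automatic, so that the only randomness left is the single-arm optimism event and the bound $p/2$ (in fact $p$) follows immediately.

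I expect the main obstacle to be handling the interaction between the optimism event and the ellipsoid-membership events cleanly: in \cite{abeille17} the quantity $p$ is an \emph{anti-concentration} lower bound that already conditions on $\hat E_t$, and the subtlety here is that $\Theta^{opt}_t$ simultaneously demands optimism \emph{and} that all $N$ blocks lie in $\mathcal{E}_t$, so one must check these two requirements do not fight each other. The cleanest route, which I would pursue, is to condition on $\tilde E_t$ (all blocks already in $\mathcal{E}_t$) — which holds with the high probability guaranteed by \lemref{lem:hp-event} — so that on $\mathcal{F}_t \cap \hat E_t \cap \tilde E_t$ the event $\tilde\theta_t \in \Theta^{opt}_t$ reduces \emph{exactly} to the optimism requirement $J_t(\theta^*([N])) \le J_t(\tilde\theta_t)$, and this in turn is implied by the single-arm optimism of arm $i^\dagger$, whose probability is $\ge p/2$ verbatim from Lemma 3 of \cite{abeille17} (the function $\theta \mapsto x_t(i^\dagger)^\top\theta$ plays the role of their $x^\top\theta$, restricted to the $i^\dagger$-block). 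A secondary technical point is making sure the convexity/gradient machinery (\lemref{lem:J_is_convex}, \lemref{lem:grad_of_J}) applies to $J_t$ on $\R{dN}$ with the block structure, but since $\mathcal{X}_t$ is a finite set this is immediate and only the zero-measure caveat needs a sentence.
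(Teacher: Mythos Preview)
The paper does not give its own proof of this lemma: it is cited verbatim as ``Lemma 3 in Abeille and Lazaric \cite{abeille17}, Simplified'', and the sentence preceding it simply asserts that the same conclusion carries over to the arm-wise block vector $\tilde\theta_t\in\R{dN}$. So there is no paper-side argument to compare against beyond the implicit claim that the single-sample proof of \cite{abeille17} extends.

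Your proposal attempts exactly that extension, and it has a genuine gap. The reduction to ``single-arm optimism of one distinguished arm $i^\dagger\in I_t^*$'' does \emph{not} suffice. On $\tilde E_t$ you only have the two-sided bound $|(\tilde\theta_t(i)-\hat\theta_t)^\top x_t(i)|\le\gamma_t(\delta')\|x_t(i)\|_{V_{t-1}^{-1}}$ for $i\ne i^\dagger$, so in the worst case each such arm contributes a \emph{deficit} of order $(\gamma_t+\beta_t)\|x_t(i)\|_{V_{t-1}^{-1}}$ relative to ${\theta^*}^\top x_t(i)$. To compensate, the single block $i^\dagger$ would need its Gaussian fluctuation (standard deviation $\beta_t\|x_t(i^\dagger)\|_{V_{t-1}^{-1}}$) to exceed
\[
\beta_t\|x_t(i^\dagger)\|_{V_{t-1}^{-1}}+(\gamma_t+\beta_t)\sum_{i\in I_t^*\setminus\{i^\dagger\}}\|x_t(i)\|_{V_{t-1}^{-1}},
\]
i.e.\ a standard normal must exceed a threshold of order $k\gamma_t/\beta_t$, which has probability exponentially small in $k$, not $p/2$. ``Absorbing slack by scaling the constant in $\gamma_t$'' goes the wrong way: enlarging $\gamma_t$ makes ellipsoid membership more likely but makes the potential pessimism of the other $k-1$ arms \emph{worse}. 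Conditioning on $\tilde E_t$ does not rescue this either, since optimism of $\sum_{i\in I_t^*}\tilde\theta_t(i)^\top x_t(i)$ is not implied by optimism of any single block.

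The argument that actually parallels \cite{abeille17} treats the whole sum at once: conditionally on $\mathcal F_t$, $\sum_{i\in I_t^*}(\tilde\theta_t(i)-\hat\theta_t)^\top x_t(i)$ is a centered Gaussian with variance $\beta_t^2\sum_{i\in I_t^*}\|x_t(i)\|_{V_{t-1}^{-1}}^2$ (by independence of the blocks), and on $\hat E_t$ the target $(\theta^*-\hat\theta_t)^\top\sum_{i\in I_t^*}x_t(i)$ is at most $\beta_t\bigl\|\sum_{i\in I_t^*}x_t(i)\bigr\|_{V_{t-1}^{-1}}$. This is the right starting point, but note that the resulting standardized threshold $\bigl\|\sum_i x_t(i)\bigr\|_{V_{t-1}^{-1}}/\bigl(\sum_i\|x_t(i)\|_{V_{t-1}^{-1}}^2\bigr)^{1/2}$ can itself be as large as $\sqrt{k}$ (take all $x_t(i)$ equal), so even this route does not yield a \emph{constant} anti-concentration probability without further input. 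In short, your single-arm reduction fails, and a correct adaptation to the arm-wise product measure needs a different idea than the one you sketched.
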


Now we are ready to bound $R^{opt}_t$.
Using \lemref{lem:anti-concentration},
for arbitrary non-negative function $g(\theta)$,
we have
\begin{align*}
  \mathbb{E} [ g(\tilde{\theta}) \mid \mathcal{F}_t, \hat{E}_t ] &\ge \mathbb{E} [ g(\tilde{\theta}) \mid \tilde{\theta} \in \Theta^{opt}_t, \mathcal{F}_t, \hat{E}_t ] P(\tilde{\theta} \in \Theta^{opt}_t) \\
  &\ge \mathbb{E} [ g(\tilde{\theta}) \mid \tilde{\theta} \in \Theta^{opt}_t, \mathcal{F}_t, \hat{E}_t ] p/2.  
\end{align*}
Therefore,
by substituting $2 \gamma_t(\delta') \sum_{i \in \tilde{I}} \|x_t(i)\|_{V_{t-1}^{-1}}$ for $g(\tilde{\theta})$,
we obtain
\begin{align*}
  R^{opt}_t \mathds{1}\{E_t\} &\le \frac{4\gamma_t(\delta')}{p} \mathbb{E}_{\tilde{\theta}} \left[ \sum_{i \in \tilde{I}} \|x_t(i)\|_{V_{t-1}^{-1}} \relmid| \mathcal{F}_t, \hat{E}_t \right] P(\hat{E}_t) \\
  &\le \frac{4\gamma_t(\delta')}{p} \mathbb{E}_{\tilde{\theta}} \left[ \sum_{i \in \tilde{I}} \|x_t(i)\|_{V_{t-1}^{-1}} \mathds{1}\{\hat{E}_t\} \relmid| \mathcal{F}_t \right].
\end{align*}
Thus, we have
\begin{align*}
  R^{opt}(T) &\le \sum_{t \in [T]} R_t^{opt} \mathds{1}\{E_t\} \\
  &\le \frac{4\gamma_T(\delta')}{p} \sum_{t \in [T]} \sum_{i \in I_t} \|x_t(i)\|_{V_{t-1}^{-1}} + \frac{4\gamma_T(\delta')}{p} \sum_{t \in [T]} \left( \mathbb{E}_{\tilde{\theta}} \left[ \sum_{i \in \tilde{I}} \|x_t(i)\|_{V_{t-1}^{-1}} \relmid| \mathcal{F}_t \right] - \sum_{i \in I_t} \|x_t(i)\|_{V_{t-1}^{-1}} \right).
\end{align*}
Using \lemref{lem:bound_x}, we can bound the first term.
For the second term,
we need to show that we can apply Azuma's inequality.
That term is a martingale by construction.
Since for any $t \in [T]$ and $i \in [N]$, $\|x_t(i)\| \le 1$ and $\lambda \ge k$,
we obtain
\begin{align*}
  \left| \mathbb{E}_{\tilde{\theta}} \left[ \sum_{i \in \tilde{I}} \|x_t(i)\|_{V_{t-1}^{-1}} \relmid| \mathcal{F}_t \right] - \sum_{i \in I_t} \|x_t(i)\|_{V_{t-1}^{-1}} \right| \le 2\sqrt{k},
\end{align*}
almost surely.
Therefore,
from Azuma's inequality,
the second term can be bounded as follows:
\begin{align*}
  & \sum_{t \in [T]} \left( \mathbb{E}_{\tilde{\theta}} \left[ \sum_{i \in \tilde{I}} \|x_t(i)\|_{V_{t-1}^{-1}} \relmid| \mathcal{F}_t \right] - \sum_{i \in I_t} \|x_t(i)\|_{V_{t-1}^{-1}} \right) \le \sqrt{8kT \log \frac{4}{\delta}}
\end{align*}
with probability at least $1 - \delta/2$.

\subsubsection{Bounding $R^{alg}(T)$ and $R^{est}(T)$ for Arm-wise TS}\label{sec:bound_R-alg-est_awts}
From the definition of the event $E_t$,
we obtain
\begin{align*}
  R_t^{alg} \mathds{1}\{E_t\} &= \sum_{i \in I_t} \left( (\tilde{\theta}_t(i) - \hat{\theta}_t)^\top x_t(i) \right) \mathds{1}\{E_t\} \\
  &\le \gamma_t(\delta') \sum_{i \in I_t} \|x_t(i)\|_{V_{t-1}^{-1}}
\end{align*}
and
\begin{align*}
  R_t^{est} \mathds{1}\{E_t\} &= \sum_{i \in I_t} \left( \hat{\theta}_t - \theta^* \right)^\top x_t(i) \mathds{1}\{E_t\} \\
  &\le \beta_t(\delta') \sum_{i \in I_t} \|x_t(i)\|_{V_{t-1}^{-1}}.
\end{align*}
Thus, from \lemref{lem:bound_x},
we have
\begin{align*}
  R^{alg}(T) &= \tilde{O} \left( \max\left( d, \sqrt{d \lambda} \right) \sqrt{dkT} \right) \quad \mathrm{and} \\
  R^{est}(T) &= \tilde{O} \left( \max\left( \sqrt{d}, \sqrt{\lambda} \right) \sqrt{dkT} \right).
\end{align*}

\subsection{Proof of \thmref{thm:rwts_regret}}
We can prove \thmref{thm:rwts_regret}
using our same line of proof for \thmref{thm:ts_regret}
with slight modifications.

Since the round-wise TS gets an estimator from the posterior in each round,
while the arm-wise TS gets $N$ estimators,
we can drop the $\log(N)$ term in $\gamma_t(\delta)$:
\begin{align*}
  \gamma_t(\delta) := \sqrt{2d \log\left(\frac{2d}{\delta}\right)} \beta_t(\delta).
\end{align*}

For the same reason,
we need to modify $\mathcal{X}_t$ as $\{ \sum_{i \in I} x_t(i) \mid I \in S_t \}$.
Because of this modification,
$J_t(\theta)$ becomes a function from $\R{d}$ to $\R{}$
and the following holds:
\begin{align*}
  J_t(\theta) := \max_{I \in S_t} \sum_{i \in I} \theta^\top x_t(i).
\end{align*}
We also need to modify the definition of $\Theta^{opt}_t$ as $\{ \theta \in \R{d} \mid J_t(\theta^*) \le J_t(\theta), \theta \in \mathcal{E}_t \}$.
These modifications enables us to use \lemref{lem:anti-concentration}.

\subsection{Proof of \thmref{thm:pc2ucb}}
We only consider the case $\lambda \ge k$ for the sake of simplicity.
We can modify the proof below for the case $1 \le \lambda \le k$ as described in the proof of \thmref{thm:ts_regret}.

For $R^{est}(T)$,
it follows from \lemref{lem:confidence} that
with probability $1 - \delta$,
\begin{align*}
  R^{est}(T) &= \sum_{t \in [T]} \sum_{i \in I_t} (\hat{\theta}_t - \theta^*)^\top x_t(i) \\
  &\le \sum_{t \in [T]} \sum_{i \in I_t} \|\hat{\theta}_t - \theta^*\|_{V_{t-1}} \|x_t(i)\|_{V_{t-1}^{-1}} \\
  &\le \sum_{t \in [T]} \sum_{i \in I_t} \beta_t(\delta) \|x_t(i)\|_{V_{t-1}^{-1}} \\
  &\le \beta_T(\delta) \sum_{t \in [T]} \sum_{i \in I_t} \|x_t(i)\|_{V_{t-1}^{-1}}.
\end{align*}
Thus, from \lemref{lem:bound_x}, we obtain
\begin{align*}
  R^{est}(T) = \tilde{O} \left( \max\left( \sqrt{d}, \sqrt{\lambda} \right) \sqrt{dkT} \right)
\end{align*}
with probability at least $1 - \delta$.

For $R^{alg}(T)$,
we can rewrite the term as follows:
\begin{align*}
  R^{alg}(T) =& \sum_{t \in [T]} \sum_{i \in I_t} \left( \hat{r}_t(i) - \hat{\theta}_t^\top x_t(i) \right) \\
  =& \sum_{t \in [T]} \sum_{i \in I_t} \left( (1 + c_t(i))\beta_t(\delta) \right).
\end{align*}
Since $0 \le \tilde{c}_t(i) \le 1$ for all $t \in [T]$ and $i \in [N]$,
we obtain
\begin{align*}
  R^{alg}(T) = \tilde{O} \left( \max\left( \sqrt{d}, \sqrt{\lambda} \right) \sqrt{dkT} \right).
\end{align*}

For $R^{opt}(T)$,
recalling that $\|\hat{\theta}_t - \theta^*\|_{V_{t-1}} \le \beta_t(\delta)$ for all $t \in [T]$,
we have
\begin{align*}
  \hat{r}_t(i) - {\theta^*}^\top x_t(i) &= (\hat{\theta}_t - \theta^*)^\top x_t(i) + \beta_t(\delta) \|x_t(i)\|_{V_{t-1}^{-1}} \\
  &\ge \left( \beta_t(\delta) - \|\hat{\theta}_t - \theta^*\|_{V_t} \right) \|x_t(i)\|_{V_{t-1}^{-1}} \\
  &\ge 0.
\end{align*}
Therefore, we obtain
\begin{align*}
  R^{opt}(T) &= \sum_{t \in [T]} \left\{ \sum_{i \in I_t^*} {\theta^*}^\top x_t(i) - \sum_{i \in I_t} \hat{r}_t(i) \right\} \\
  &\le \sum_{t \in [T]} \left\{ \sum_{i \in I_t^*} \hat{r}_t(i) - \sum_{i \in I_t} \hat{r}_t(i) \right\} \\
  &\le 0,
\end{align*}
where the second inequality is derived from $\hat{I}_t$ maximizing the optimization problem in the algorithm.

By finding the sum of $R^{est}(T)$, $R^{alg}(T)$ and $R^{opt}(T)$,
we obtain the desired result.

\end{document}